\documentclass[11pt]{article}
\usepackage[margin=1.2in]{geometry}
\usepackage{braket,amsfonts}
\usepackage{pgfplots}

\usepackage{microtype}
\usepackage{graphicx}
\usepackage{subfigure}
\usepackage{booktabs} 
\usepackage{hyperref}
\usepackage{amssymb,amsmath,amsthm,mathtools, esint}
\usepackage[capitalize,noabbrev]{cleveref}
\usepackage{array}
\usepackage{algorithmic}

\usepackage{graphicx,epstopdf}

\Crefname{ALC@unique}{Line}{Lines}

\usepackage{amsopn}

\usepackage{graphicx}
\graphicspath{{figures/}}
\RequirePackage{fancyhdr}
\RequirePackage{xcolor} 
\RequirePackage{algorithm}
\RequirePackage{algorithmic}
\usepackage{tikz}

\usepackage{multirow}
\usepackage{float}
\usepackage{caption}
\usepackage{rotating}


\setcounter{tocdepth}{4}

\DeclareMathOperator{\Exp}{\mathop{\mathbb{E}}}


\usepackage{amsmath,amsfonts,bm}
\usepackage{xcolor}










\def\eqref#1{(\ref{#1})}









\def\1{\bm{1}}








\def\vtheta{{\bm{\theta}}}



\DeclareMathAlphabet{\mathsfit}{\encodingdefault}{\sfdefault}{m}{sl}
\SetMathAlphabet{\mathsfit}{bold}{\encodingdefault}{\sfdefault}{bx}{n}











\newcommand{\R}{\mathbb{R}}



\usepackage{xspace}
\usepackage{bold-extra}
\usepackage[most]{tcolorbox}

\colorlet{texcscolor}{blue!50!black}
\colorlet{texemcolor}{red!70!black}
\colorlet{texpreamble}{red!70!black}
\colorlet{codebackground}{black!25!white!25}

\numberwithin{equation}{section}

\newtheorem{theorem}{Theorem}
\newtheorem{proposition}{Proposition}

\newtheorem{lemma}{Lemma}
\newtheorem{remark}{Remark}
\newtheorem{definition}{Definition}

\numberwithin{equation}{section}

\newenvironment{keywords}{
    \begin{trivlist}\item[]{\bfseries Keywords:}\ }
    {\end{trivlist}}

\author{Wonjun Lee\thanks{Institute for Mathematics and Its Applications, University of Minnesota, USA (\texttt{lee01273@umn.edu}).}
\and Yifei Yang\thanks{Electronic Information School, Wuhan University, China (\texttt{yfyang@whu.edu.cn}).}
\and Dongmian Zou\thanks{Division of Natural and Applied Sciences, Duke Kunshan University, China (\texttt{dongmian.zou@duke.edu}).}
\and Gilad Lerman\thanks{School of Mathematics, University of Minnesota, USA (\texttt{lerman@umn.edu}).}}

\title{Monotone Generative Modeling \\ via a Gromov-Monge Embedding}

\begin{document}

\maketitle

\begin{abstract}
Generative adversarial networks (GANs) are popular for generative tasks; however, they often require careful architecture selection, extensive empirical tuning, and are prone to mode collapse. 
To overcome these challenges, we propose a novel model that identifies the low-dimensional structure of the underlying data distribution, maps it into a low-dimensional latent space while preserving the underlying geometry, and then optimally transports a reference measure to the embedded distribution.  
We prove three key properties of our method: 1) The encoder preserves the geometry of the underlying data; 2) The generator is $c$-cyclically monotone, where $c$ is an intrinsic embedding cost employed by the encoder; and 3) The discriminator's modulus of continuity improves with the geometric preservation of the data. Numerical experiments demonstrate the effectiveness of our approach in generating high-quality images and exhibiting robustness to both mode collapse and training instability.

\end{abstract}

\begin{keywords}
    generative adversarial network (GAN), geometry-preserving encoder, Gromov-Monge distance, optimal transport, $c$-cyclical monotonicity, mode collapse, training instability
\end{keywords}

\section{Introduction}
The fundamental task of data generation requires a good approximation of the underlying distribution of the input dataset to generate new data instances that resemble the originals. 
Prominent approaches for generative modeling use neural networks (NNs) for such approximation and include Generative Adversarial Networks (GANs)~\cite{goodfellow2014generative}, Variational Auto-Encoders (VAEs)~\cite{kingma2013auto}, encoder-based GANs~\cite{larsen2016vaegan, zhu2017unpaired, srivastava2017veegan, bao2017cvae, gao2020zero}, normalizing flows~\cite{rezende2015variational}, and diffusion models~\cite{ho2020denoising}. VAE-based and GAN-based methods 
aim to first generate samples in a latent space, whose dimension is significantly lower than that of the input space, and then map these samples to the input space. The use of a low-dimensional latent space can enable efficient generation of high-quality examples when the underlying data distribution can be approximated by a sufficiently smooth low-dimensional structure. This assumption is common in addressing many data science problems and referred to as the manifold hypothesis~\cite{fefferman2016testing,pope2021intrinsic}.

Despite the widespread popularity of GAN-based and VAE-based methods, they suffer from significant drawbacks. A primary challenge is the substantial sensitivity of the generated map to the initializations of NN  parameters~\cite{bojanowski2018optimizing} and choice of NN  architecutres~\cite{kurach2019large}, which often impedes effective model training. Another notable limitation of GAN-based methods is mode collapse~\cite{pmlr-v70-arora17a,arjovsky2017towards}, where they fail to capture all modes in the underlying data distribution, resulting in generated examples that only represent a limited portion of the dataset. Despite many attempts to address these issues~\cite{metz2016unrolled, arjovsky2017wasserstein, srivastava2017veegan, gulrajani2017improved, wu2018wasserstein, liu2019wasserstein2, khrulkov2021functional, birrell2022structure, luo2023stabilizing}, these challenges persist.

To address these challenges, we introduce a novel encoder-based GAN method with intriguing theoretical properties. Our approach is distinguished from many others by carefully designing the encoder and generator with the following theoretically-guaranteed properties. 
First, we guarantee that the encoder preserves the geometry of the underlying data. This property alleviates mode collapse. To enforce this property, we introduce a novel regularization cost derived from the Gromov-Wasserstein (GW) distance~\cite{memoli2007use}, which is a variant of the OT distance for quantifying structural disparities between distributions in different metric spaces. 
The second property is the $c$-cyclical monotonicity of the generator $G$, where $c$ is a cost function defined with respect to the encoder. It is rooted in optimal transport (OT) theory and leads to enhanced training stability for different choices of NN architectures and parameters. Moreover, we can quantify the modulus of continuity of the discriminator and establish its improvement with the geometric preservation of the underlying data by the encoder. This leads to more efficient choices for the NN architecture of the discriminator, requiring fewer parameters.

\subsection{Related Works}\label{sec:related}

We review works related to our study and highlight the  advancements and limitations of these methods to situate our contributions within the existing literature.

\paragraph{Encoder-based GAN}
The integration of GANs with encoders has been explored in various models \cite{larsen2016vaegan, zhu2017unpaired, srivastava2017veegan, bao2017cvae, gao2020zero}. For example, VAEGAN \cite{larsen2016vaegan} uses an encoder to minimize the evidence lower bound loss, while VEEGAN \cite{srivastava2017veegan} uses an encoder to minimize the cross-entropy loss. Encoder-based GAN methods aim to improve mode collapse issues present in GAN-based approaches and enhance image quality over VAE-based ones. However, they are sensitive to the neural network architectures of the encoder, leading to training instability.

\paragraph{Local isometry encoders}
Recent studies have introduced encoders that locally preserve data structures: \cite{kato2020rate} proposed a rate-distortion optimization guided encoder and showed how it enabled local-isometric data embedding; \cite{gropp2020isometric} proposed a locally isometric decoder by introducing a special loss function and forming the encoder as the pseudo-inverse of the decoder; and \cite{lee2022regularized} considered a family of coordinate-invariant regularization terms to measure how closely the decoder approximates a scaled isometry. However, we are unaware of any attempts for global geometric preservation by the encoder, rather than local.

\paragraph{Generative models via OT}  
The OT distance has been extensively incorporated in generative modeling. Various investigations \cite{arjovsky2017wasserstein, gulrajani2017improved, wu2018wasserstein} have integrated the \(W_1\) distance into GAN-based methods, while others \cite{makkuva2020optimal, taghvaei20192, liu2019wasserstein2, lei2019geometric, korotin2021wasserstein, rout2022generative} have focused on the \(W_2\) distance. Our approach uses the \(W_2\) distance but computes it in the latent space. While prior works, such as \cite{korotin2021wasserstein}, have also used the Wasserstein distance in the latent space, our method distinguishes itself by employing a specialized encoder that preserves the geometric structure of the data distribution.

\paragraph{GW distance}
This distance has been widely employed to quantify structural differences across various distributions in numerous contexts \cite{memoli2011gromov, peyre2016gromov, alvarez2018gromov, xu2019scalable, xu2019gromov, li2022gromov}. It has also been integrated into specific NN architectures, such as transformers~\cite{huang2022improving}. Despite its versatility, the GW distance remains relatively underutilized in generative modeling, with only a few generative models incorporating it \cite{bunne2019learning, titouan2019sliced, nakagawa2023gromovwasserstein}. Existing methods often suffer from computational inefficiencies and may produce suboptimal results due to their non-convex formulations.

\subsection{Contribution}

We summarize the main contributions of our work.

\begin{enumerate}
\item We derive a novel generative algorithm that can be implemented using an encoder-based GAN framework. For this purpose, we introduce a novel cost derived from the GW distance and a novel OT-based framework.
\item We establish the following guarantees: a) The encoder preserves the underlying data geometry; b) The generator is $c$-cyclically monotone; and c) The discriminator's modulus of continuity improves with the geometric preservation of the data.
\item Numerical experiments on CIFAR10 and Tiny ImageNet as well as on a synthetic setting demonstrate generated objects of the highest quality with the best training and mode-coverage stability among GAN-based methods.  
\end{enumerate}

\subsection{Structure of the Rest of the paper}
\Cref{sec:intro} reviews the  mathematical framework of our proposed method and motivates its choices. 
\Cref{sec:ot-gw} introduces our novel cost function and demonstrates its ability to enforce a geometry-preserving encoder. \Cref{sec:gen} delves into additional mathematical details, provides interpretation of our OT-inspired, encoder-based GAN framework, and establishes theoretical properties of the generator and discriminator. \Cref{sec:alg} presents the main algorithm and \Cref{sec:exp} compares this algorithm  
 with other GAN-based methods (including encoder-based GANs) on both artificial and real datasets. Finally, \Cref{sec:conclusion} concludes this work.

\section{Motivating Our Method}\label{sec:intro}

Our approach involves two key components: a specialized encoder for mitigating mode collapse and the use of optimal transport costs to enforce a $c$-cyclically monotone generator and stabilize training. In order to explain the mathematical innovation of our work, we first briefly review in \Cref{sec:review-methods} the mathematical ideas behind GAN and encoder-based GAN. \Cref{sec:address_mode_collapse} 
and \Cref{sec:address_train_instable} 
explain the mathematical ideas of each new  component, while motivating them with the problems they aim to solve.

\paragraph{Notation and Conventions} Throughout the paper we 
assume a compact $d$-dimensional manifold $\mathcal{M}$ in $\mathbb{R}^D$ and a probability measure $\mu$ supported on  $\mathcal{M}$, that is, $\mu \in \mathcal{P}(\mathcal{M})$. We also assume a latent space $Y \subset \mathbb{R}^d$, where $d \ll D$ and a latent distribution $\nu\in \mathcal{P}(Y)$, where commonly $Y = \mathbb{R}^d$ and $\nu$ is the standard normal distribution whose covariance is the identity matrix, that is $\nu = N(0, \boldsymbol{I})$. 
Ideally, the aim is to compute a generator map $G:Y\rightarrow \mathcal{M}$ that pushes forward the latent distribution $\nu$ to the data distribution $\mu$, i.e., $G_\#\nu = \mu$, where the pushforward measure $G_\#\nu$ is defined by $G_\# \nu(B) = \nu(G^{-1}(B))$ for all $B \subset Y$. In practice, one only has a finite sample from $\mu$, but given the effective generalization of neural networks and the large number of samples, it is common to mathematically address this continuous setting. 

\subsection{Review of the Mathematical Frameworks of GANs and Encoder-based GANs}\label{sec:review-methods}

\paragraph{GAN}
The GAN objective is formulated as a minimax problem:
\[
\min_{G} \max_{\psi} L_{\psi}(G_\# \nu, \mu),
\]
where $L_\psi$ is a cost function involving a discriminator $\psi$  measuring the discrepancy between $G_\# \nu$ and $\mu$. The optimal generator, $G^*$, of this minimax problem satisfies $G^*_\#\nu = \mu$.

\paragraph{Encoder-based GAN}
The common framework for these methods, which deviates from our encoder-based GAN, aims to find $G$ and $\psi$ as in GAN and an encoder $T$ solving
\[
\min_{G,T} \max_{\psi} L_{\psi}(G_\# \nu, \mu) + \mathbb{E}_{x\sim \mu}\|x - G(T(x))\|^2 + C( T_\#\mu,\nu)
\]
where the second term enforces the inverse relationship between the encoder and generator, $G = T^{-1}$, and the last measures the discrepancy between $T_\#\mu$ and $\nu$ using a cost function $C$.

\subsection{Addressing Mode Collapse via a Specialized Encoder}
\label{sec:address_mode_collapse}
Mode collapse presents a significant challenge in generation tasks, where the diversity of traits in the generated distribution does not fully capture the input data distribution. 

As it was discussed in \Cref{sec:review-methods}, the main objective of GAN is to minimize the distance between two distributions: the data distribution $\mu$ and the generated distribution $G_\#\nu$. Mode collapse is often attributed to the challenge of comparing samples from $\mu$ and $G_\# \nu$ in high-dimensional space $\mathbb{R}^D$.  
To overcome the complexity of comparing distributions in high-dimensional spaces, we propose a novel encoder-based GAN method, reducing the problem to a lower-dimensional space while preserving the underlying geometry. 
This latter idea of preserving geometry, explained below, differs from the common encoder-based GAN mechanism.

We quantify the encoder preservation of the underlying geometry by the bi-Lipschitzness of this encoder. A map $T: \mathcal{M} \rightarrow Y$ is $(\alpha^{-
1})$-bi-Lipschitz if there exists $0 < \alpha \leq 1$ such that
\begin{equation}\label{eq:bi-lip}
    \alpha \|x-x'\| \leq \|T(x) - T(x')\| \leq \frac{1}{\alpha} \|x-x'\|,\quad \forall x,x'\in\mathcal{M}.
\end{equation}  
The bi-Lipschitzness of $T$ implies the following relationship between the Wasserstein-$p$ distance, $W_p$, of $G_\# \nu$ and $\mu$,  which is defined in \Cref{sec:ot}, and the $W_p$ distance of the quantities embedded by $T$, which is proved in the appendix: 
\begin{proposition}\label{prop:upper-bound}
    For $p \geq 1$ and $0 < \alpha \leq 1$, if $T: \mathcal{M} \rightarrow Y$ is $(\alpha^{-1})$-bi-Lipschitz, then
    \begin{equation*}
        \alpha W_p(T \circ G_\# \nu, T_\# \mu)
        \leq W_p(G_\# \nu, \mu) \leq \frac{1}{\alpha} W_p(T \circ G_\# \nu, T_\# \mu).
        \end{equation*}
\end{proposition}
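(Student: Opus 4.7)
The plan is to prove both inequalities by exploiting the Kantorovich formulation of $W_p$ together with the two directions of the bi-Lipschitz bound, transporting couplings between the spaces $\mathcal{M}\times\mathcal{M}$ and $Y\times Y$ via $T$ and its inverse.

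For the \textbf{lower bound} $\alpha W_p(T\circ G_\#\nu, T_\#\mu) \leq W_p(G_\#\nu,\mu)$, I would start from an arbitrary coupling $\pi \in \Pi(G_\#\nu,\mu)$ on $\mathcal{M}\times\mathcal{M}$ and push it forward through $T\times T$ to obtain a coupling $\tilde\pi := (T\times T)_\#\pi \in \Pi(T\circ G_\#\nu, T_\#\mu)$ on $Y\times Y$. Applying the upper bi-Lipschitz estimate $\|T(x)-T(x')\|\leq \alpha^{-1}\|x-x'\|$ inside the integral gives
\begin{equation*}
\int \|u-v\|^p\, d\tilde\pi(u,v) \;=\; \int \|T(x)-T(y)\|^p\, d\pi(x,y) \;\leq\; \alpha^{-p}\int \|x-y\|^p\, d\pi(x,y).
\end{equation*}
Taking the infimum over $\pi$ on the right and using that $\tilde\pi$ is admissible on the left yields the lower bound after raising to the power $1/p$.

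For the \textbf{upper bound}, the key observation is that the bi-Lipschitz hypothesis forces $T$ to be injective on $\mathcal{M}$, so $T^{-1}:T(\mathcal{M})\to \mathcal{M}$ exists and satisfies $\|T^{-1}(u)-T^{-1}(v)\|\leq \alpha^{-1}\|u-v\|$ on $T(\mathcal{M})$. Since both $T\circ G_\#\nu$ and $T_\#\mu$ are supported on $T(\mathcal{M})$, any coupling $\tilde\pi\in \Pi(T\circ G_\#\nu, T_\#\mu)$ is supported on $T(\mathcal{M})\times T(\mathcal{M})$. Pushing it back through $T^{-1}\times T^{-1}$ gives a coupling $\pi\in\Pi(G_\#\nu,\mu)$, and the same change-of-variables argument with the Lipschitz bound on $T^{-1}$ produces the desired inequality.

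The proof is essentially routine once the setup is right, so the main conceptual point, rather than a difficult obstacle, is justifying the existence and the correct Lipschitz constant of $T^{-1}$ on the support of the relevant measures. One technical care is ensuring measurability of $T$ (and hence the validity of pushforwards and the measurability of $T^{-1}$ on $T(\mathcal{M})$), but this follows from continuity of $T$ together with compactness of $\mathcal{M}$, which guarantees that $T(\mathcal{M})$ is compact and that $T:\mathcal{M}\to T(\mathcal{M})$ is a homeomorphism. No deeper machinery than the Kantorovich definition is needed.
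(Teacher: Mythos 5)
Your proposal is correct and takes essentially the same route as the paper: both proofs work directly from the Kantorovich formulation, transporting couplings between $\mathcal{M}^2$ and $Y^2$ via $T\times T$ (for the lower bound) and $T^{-1}\times T^{-1}$ (for the upper bound), and then applying the two sides of the bi-Lipschitz estimate; the paper phrases the change of variables by inserting $T^{-1}\circ T$ inside the norm before switching the coupling set, which is an equivalent presentation of your pushforward/pullback argument. Your explicit remark on why $T^{-1}$ exists on $T(\mathcal{M})$ with the correct Lipschitz constant is a reasonable clarification of a point the paper leaves implicit.
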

Ideally, we say that $T$ is a geometry-preserving map if it is $(\alpha^{-
1})$-bi-Lipschitz and $\alpha$ is sufficiently close to 1. We would rather keep this terminology flexible, by not specifying how close $\alpha$ is to 1 and allowing the following two relaxations: 1) The condition in \eqref{eq:bi-lip} can hold for $x$ and $x'$ in a sufficiently large subset of $\cal M$; 2) The same condition only holds for sufficiently well-separated data points $x$ and $x'$.
In view of Proposition \ref{prop:upper-bound}, a geometry-preserving encoder $T$ effectively reduces the problem from a high-dimensional to a lower-dimensional setting. 

For general distributions $\mu$ and $\nu$, a map $T$ such that $T_\#\mu=\nu$ will typically not be geometry-preserving.  
This limitation is intrinsic in earlier encoder-based GAN approaches, which aim to establish a pushforward map from the data distribution to the Gaussian distribution, and this map cannot generally preserve the geometry of the data distribution. Nevertheless, our method does not enforce the pushforward constraint and aims to obtain instead a geometry-preserving map. However, it seems impossible to directly enforce an encoder to be   
$\alpha^{-1}$-bi-Lipschitz with $\alpha$ close to 1 by an algorithm. We thus introduce a newly-proposed embedding cost 
for $T$ given $\mu$
derived from the Gromov-Wasserstein distance and show that if this cost is sufficiently small then $T$ is  geometry-preserving (see \Cref{sec:ot-gw}). In practice, the algorithm uses this new cost as a regularization term to make sure that it is sufficiently small (see \Cref{sec:alg}).

The generator $G$ no longer adheres to the inverse relationship $G = T^{-1}$ as in encoder-based GAN. That is, the map $R := T \circ G$ is different from the identity. Ideally, we aim to define $R$ as the optimal transport map between $\nu$ and $T_\#\mu$, i.e., 
\begin{align}
\label{eq:def_R}
W_2^2(\nu, T_\#\mu) = \min_{\substack{R:\\  R_\#\nu = T_\#\mu}} \int_Y \frac{\|R(y)-y\|^2}{2} d\nu(y) \equiv
\min_{\substack{G:\\  (T\circ G)_\#\nu = T_\#\mu}} \int_Y \frac{\|T\circ G(y)-y\|^2}{2} d\nu(y).
\end{align}
Since the pushforward constraint $R_\#\nu = T_\#\mu$ is hard to implement, we use a common minimax formulation involving a discriminator. \Cref{fig:three-maps} illustrates the three maps $G$, $T$, and $R$.

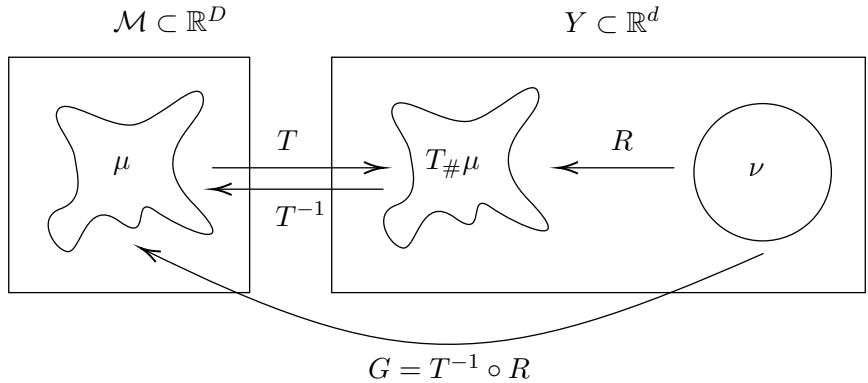
\begin{figure}[h!]
  \centering

  \tikzset{every picture/.style={line width=0.5pt}} 

  \begin{tikzpicture}[x=0.75pt,y=0.75pt,yscale=-0.9,xscale=0.9]

\draw   (81,70) -- (216,70) -- (216,202) -- (81,202) -- cycle ;
\draw   (262,70) -- (561,70) -- (561,202) -- (262,202) -- cycle ;
\draw   (109,99) .. controls (114.42,92.75) and (135.57,109.09) .. (150,108) .. controls (164.43,106.91) and (188,79) .. (190,93) .. controls (192,107) and (155,122) .. (184.31,152.05) .. controls (213.63,182.1) and (177,166) .. (163,156) .. controls (149,146) and (162,176) .. (143,162) .. controls (124,148) and (121.52,179.84) .. (114,179) .. controls (106.48,178.16) and (96,160) .. (110,155) .. controls (124,150) and (119.4,135.46) .. (118,126) .. controls (116.6,116.54) and (103.58,105.25) .. (109,99) -- cycle ;
\draw   (465,134.5) .. controls (465,113.24) and (482.24,96) .. (503.5,96) .. controls (524.76,96) and (542,113.24) .. (542,134.5) .. controls (542,155.76) and (524.76,173) .. (503.5,173) .. controls (482.24,173) and (465,155.76) .. (465,134.5) -- cycle ;
\draw    (454,132) -- (391,132) ;
\draw [shift={(389,132)}, rotate = 360] [color={rgb, 255:red, 0; green, 0; blue, 0 }  ][line width=0.75]    (10.93,-3.29) .. controls (6.95,-1.4) and (3.31,-0.3) .. (0,0) .. controls (3.31,0.3) and (6.95,1.4) .. (10.93,3.29)   ;
\draw    (195,132) -- (289,132) ;
\draw [shift={(291,132)}, rotate = 180] [color={rgb, 255:red, 0; green, 0; blue, 0 }  ][line width=0.75]    (10.93,-3.29) .. controls (6.95,-1.4) and (3.31,-0.3) .. (0,0) .. controls (3.31,0.3) and (6.95,1.4) .. (10.93,3.29)   ;
\draw   (297,97) .. controls (302.42,90.75) and (323.57,107.09) .. (338,106) .. controls (352.43,104.91) and (376,77) .. (378,91) .. controls (380,105) and (343,120) .. (372.31,150.05) .. controls (401.63,180.1) and (365,164) .. (351,154) .. controls (337,144) and (350,174) .. (331,160) .. controls (312,146) and (309.52,177.84) .. (302,177) .. controls (294.48,176.16) and (284,158) .. (298,153) .. controls (312,148) and (307.4,133.46) .. (306,124) .. controls (304.6,114.54) and (291.58,103.25) .. (297,97) -- cycle ;
\draw    (197,144) -- (291,144) ;
\draw [shift={(195,144)}, rotate = 0] [color={rgb, 255:red, 0; green, 0; blue, 0 }  ][line width=0.75]    (10.93,-3.29) .. controls (6.95,-1.4) and (3.31,-0.3) .. (0,0) .. controls (3.31,0.3) and (6.95,1.4) .. (10.93,3.29)   ;
\draw    (504,180) .. controls (351,248) and (300,249) .. (155,177) ;
\draw [shift={(155,177)}, rotate = 26.41] [color={rgb, 255:red, 0; green, 0; blue, 0 }  ][line width=0.75]    (10.93,-3.29) .. controls (6.95,-1.4) and (3.31,-0.3) .. (0,0) .. controls (3.31,0.3) and (6.95,1.4) .. (10.93,3.29)   ;

\draw (138,40) node [anchor=north west][inner sep=0.75pt]    {${\mathcal{M}}\subset \mathbb{R}^D$};
\draw (391,40) node [anchor=north west][inner sep=0.75pt]    {${Y}\subset \mathbb{R}^d$};
\draw (138,124) node [anchor=north west][inner sep=0.75pt]    {$\mu $};
\draw (313,120) node [anchor=north west][inner sep=0.75pt]    {$T_{\#} \mu $};
\draw (494,128) node [anchor=north west][inner sep=0.75pt]    {$\nu $};
\draw (230,109.4) node [anchor=north west][inner sep=0.75pt]    {$T$};
\draw (417,109.4) node [anchor=north west][inner sep=0.75pt]    {$R$};
\draw (229,149.4) node [anchor=north west][inner sep=0.75pt]    {$T^{-1}$};
\draw (281,235) node [anchor=north west][inner sep=0.75pt]    {$G=T^{-1} \circ R$};

\end{tikzpicture}
\caption{Illustration of our method for generation of samples in $\mathcal{M}$ with a latent space $Y$ and a geometry-preserving map $T$. }
\label{fig:three-maps}
\end{figure}

We note that in view of \Cref{prop:upper-bound} 
$W_2(R_\# \nu, T_\# \mu)$ controls $W_2(G_\# \nu, \mu)$ as follows: 
\begin{align}\label{eq:new-upper-bound}
    {\alpha} W_2(R_\# \nu, T_\# \mu) \leq W_2(G_\# \nu, \mu) \leq \frac{1}{\alpha} W_2(R_\# \nu, T_\# \mu).
\end{align}
That is, instead of the comparing via $G$ between $\nu$ and $\mu$, who lie in different spaces, we focus on the comparison via $R$ between $\nu$ and $T_\# \mu$, which both lie in $\mathbb{R}^d$.

\subsection{Addressing training instability by $c$-cyclical monotonicity} 
\label{sec:address_train_instable}
We further elaborate on our approach to computing the generator such that $G = T^{-1} \circ R$ using optimal transport. In particular, we discuss the 
$c$-cyclical monotonicity of $G$. 
Additionally, we highlight the benefits of this approach in addressing training instability.

We believe that training instability in GANs, VAEs and encoder-based GANs results from the non-uniqueness of the generator, satisfying non-trivial constraints. Moreover, some of the possible solutions for the generator are highly non-regular. 

To clarify this claim in a simplistic setting, 
where both $\mu$ and $\nu$ are uniform distributions on $[0,1]$. Let's examine $G_0(x) = x$ and
\begin{align*}
    G_k(x) = 2k \left| x - {(2i+1)}/{(2k)} \right|, & & i/k \leq x \leq (i+1)/k, \  i=0,\cdots,k-1 \text{ and }  k \in \mathbb{N}. 
    \end{align*}
\begin{figure}[h!]
    \centering
\tikzset{every picture/.style={line width=0.5pt}} 

\begin{tikzpicture}[x=0.75pt,y=0.75pt,yscale=-0.6,xscale=0.6]

\draw   (313,40) -- (513,40) -- (513,240) -- (313,240) -- cycle ;
\draw    (313,40) -- (363,240) ;
\draw    (413,40) -- (363,240) ;
\draw    (413,40) -- (463,240) ;
\draw    (513,40) -- (463,240) ;
\draw   (33,40) -- (233,40) -- (233,240) -- (33,240) -- cycle ;
\draw    (233,40) -- (33,240) ;

\draw (315,243.4) node [anchor=north west][inner sep=0.75pt]    {$0$};
\draw (504,243.4) node [anchor=north west][inner sep=0.75pt]    {$1$};
\draw (291,42.4) node [anchor=north west][inner sep=0.75pt]    {$1$};
\draw (291,223.4) node [anchor=north west][inner sep=0.75pt]    {$0$};
\draw (35,243.4) node [anchor=north west][inner sep=0.75pt]    {$0$};
\draw (224,243.4) node [anchor=north west][inner sep=0.75pt]    {$1$};
\draw (11,42.4) node [anchor=north west][inner sep=0.75pt]    {$1$};
\draw (11,223.4) node [anchor=north west][inner sep=0.75pt]    {$0$};
\end{tikzpicture}
    \caption{Demonstration of the graphs of the functions $G_0$ (left) and $G_2$ (right). For any $k \in \{0\} \cup \mathbb{N}$ and for $\mu$ and $\nu$ uniform distributions on $[0,1]$, $G_k\#\nu = \mu$.}
    \label{fig:G_k}
\end{figure}
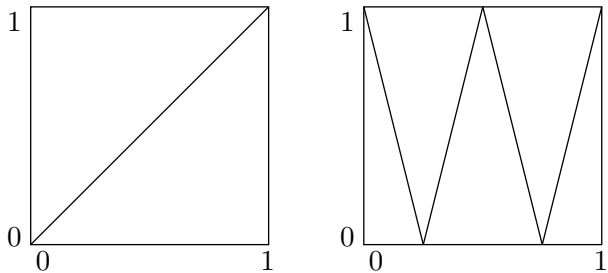

 \Cref{fig:G_k}
    demonstrates plots of $G_0$ and $G_2$.
We note that any $G_k$, for $k \geq 0$, satisfies the pushforward relationship, $G_{k\#} \nu = \mu$. However,  for $k \geq 1$, $G_k$ exhibits spikes, whose numbers increase with $k$. For large $k$, they induce highly irregular  generators. To tackle this issue in this example, one may enforce monotonicity  of the generator. As mentioned in \Cref{sec:related}, some previous works aimed to form some type of monotone generators (since $X \neq Y$  a non-standard notion of monotonicity needs to be applied), but their improvement of training stability is not significant enough. Hence, we substantially diverge from previous methods.

We show in \Cref{sec:gen} that the generator $G$ of our method is $c$-cyclically monotone, where $c\equiv c(x,y) = \|T(x) - y\|^2/2$ and is associated with the encoder $T$. The definition of $c$-cyclical monotonicity is the following one:
\begin{definition}[$c$-cyclical monotonicity \cite{santambrogio2015optimal}]\label{def:c-cm}
For $c:X \times Y \rightarrow \mathbb{R}$, a set $\Gamma \subset X \times Y$ is $c$-cyclically monotone ($c$-CM) if for every $k\in\mathbb{N}$, every permutation $\sigma$, and every finite family of points $(x_1,y_1),\cdots,(x_k,y_k)\in\Gamma$, 
\begin{align*}
  \sum^k_{i=1} c(x_i,y_i) \leq \sum^k_{i=1} c(x_{\sigma(i)},y_i).
\end{align*}
The map $G: X\rightarrow Y$ is a $c$-CM map if the set $\Gamma = \{(x,G(x)): x \in X\}$ is $c$-CM.
\end{definition}

Restricting the generator $G$ to be a $c$-CM map makes it more regular and well-behaved. This imposition of monotonicity significantly aids in stabilizing the optimization process, resulting in robustness against variations in NN  architectures and parameter initializations.

\section{Geometry-Preserving Maps via the Gromov-Monge Cost}\label{sec:ot-gw}

In this section, we introduce an approach to discover a geometry-preserving map that satisfies~\eqref{eq:bi-lip} using the GW cost~\cite{memoli2007use}.
The GW cost  is a variant of the OT cost which applies to two measures from heterogeneous metric spaces. Given two probability measures $\mu$ and $\nu$ defined on the metric spaces $X$ and $Y$, respectively, and cost functions $c_X: \mathcal{M}^2 \to \R$ and $c_Y: Y^2 \to \R$, the GW cost, $\operatorname{GW}(\mu, \nu)
        \equiv \operatorname{GW}_{c_X,c_Y}(\mu, \nu)$, is defined by
    \begin{align}\label{eq:GW-dis}
      \operatorname{GW}(\mu, \nu):=
        &\inf_{\pi \in \Pi(\mu,\nu) } \mathop{\Exp}_{((x,y),(x',y'))\sim \pi^2}\left[\left\vert c_{X}(x, x')- c_{Y}(y,y') \right\vert^2 \right]
    \end{align}
where $\Pi(\mu,\nu)$ is the following set of transport plans with respective marginals $\mu$ and $\nu$:
\begin{align*}
\Pi(\mu,\nu) := \big\{ \pi \in \mathcal{P}({X} \times {Y}) : \pi(A\times {Y})=\mu(A),
\pi({X}\times B) = \nu(B)
\ \ \forall \, A\subset {X}, B\subset {Y}\big\}.
\end{align*}
The GW cost thus finds the coupling between the two metric spaces that minimizes the cost of matching pairs of points from one space to the other. 

An alternative {Gromov-Monge (GM)} cost~\cite{memoli2022distance, dumont2022existence} uses 
a transport map $T:\mathcal{M} \rightarrow Y$ instead of the GW cost as follows:
\begin{align}\label{eq:GW-map}
        \operatorname{GM}(\mu, \nu) := 
        &\inf_{ T_\#\mu = \nu } \mathop{\Exp}_{(x,x')\sim \mu^2}\left[|c_{X}(x, x') - c_{Y}(T(x),T(x'))|^2\right].
\end{align}
In general, $\operatorname{GW}(\mu, \nu) \leq \operatorname{GM}(\mu, \nu)$; however, the equality can be attained under some conditions which are shown in~\cite{dumont2022existence}.
We use the GM cost due to the explicit use of $T$. Specifically, we aim to find a mapping $T:\mathcal{M} \rightarrow Y$, where $\mathcal{M}$ represents the support of $\mu$ that solves 
    \begin{equation}\label{eq:gw-t-project}
    \min_{\substack{T:\mathcal{M}\rightarrow Y }} \operatorname{GM}(\mu, T_\#\mu).
    \end{equation}
This gives rise to an unconstrained nonconvex optimization problem. Its global minimizer is attained when $c_X(x,x') = c_Y(T(x),T(x'))$ for all $x,x' \in \mathcal{M}$, i.e., when $T$ is an isometry.

We rewrite $\operatorname{GM}( T, \mu) \equiv \operatorname{GM}(\mu, T_\#\mu)$ to emphasize the roles of $T$ and $\mu$. We 
refer to $\operatorname{GM}( T, \mu)$ as the Gromov-Monge Embedding (GME) cost.

We show that if $\operatorname{GM}( T, \mu)$ is sufficiently small then $T$ is geometry-preserving.  This will allow us to enforce a geometry-preserving encoder in our implementation. 
The theorem employs the following notation and assumptions, which we summarize here and do not repeat in its formulation: Let $\mathcal{M}$ be a $d$-dimensional compact submanifold of  $\mathbb{R}^D$ and $\mu$ be a probability distribution on  $\mathcal{M}$. We assume the cost functions 
\begin{equation}
\label{eq:cost_function_log}
c_X(x,x') = \log(1+\|x-x'\|^2) \ \text{ and } \ c_Y(y,y')=\log(1+\|y-y'\|^2).    
\end{equation}
These functions are more common in practice and often yield tighter estimates. 

\begin{theorem} \label{thm:local-minimizer}
    Fix $\epsilon>0$ and  $0<\alpha<1$. If $T:\mathcal{M}\rightarrow Y$ satisfies $\operatorname{GM}(T,\mu) < \epsilon$, then the set 
    \begin{align*}
        K &= \left\{ (x,x') \in \mathcal{M}^2: \ \alpha \leq \frac{\|T(x) - T(x')\|^2+1}{\|x-x'\|^2 + 1} \leq \frac{1}{\alpha} \right\}
    \end{align*}
    satisfies
    \begin{equation*}
        \mu^2[K] = \int_{K} d\mu d\mu> 1 - \frac{\epsilon}{(\log \alpha)^2}.
    \end{equation*}
    Moreover, if $0< \gamma < 1$ and $x$, $x' \in K$ are sufficiently separated as follows: $\|x-x'\|^2\geq \frac{1-\alpha}{\alpha \gamma}$, then $T$ satisfies 
    the following bi-Lipschitz condition for such points: 
    \begin{align}
    \label{eq:modified_bi_lip}
        \alpha(1-\gamma) \|x-x'\|^2 \leq \|T(x) - T(x')\|^2 \leq \left(\frac{1}{\alpha} + \gamma \right)\|x-x'\|^2.
    \end{align}

\end{theorem}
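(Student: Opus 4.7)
The plan is to exploit the specific logarithmic form of the cost functions in \eqref{eq:cost_function_log} to rewrite the set $K$ in a way that matches the integrand of $\operatorname{GM}(T,\mu)$, and then apply a Chebyshev/Markov-style tail bound. Specifically, I would first observe that the condition defining $K$ is
\[
\alpha \;\leq\; \frac{\|T(x)-T(x')\|^2+1}{\|x-x'\|^2+1} \;\leq\; \frac{1}{\alpha},
\]
which, upon taking logarithms, is equivalent to
\[
|\log\alpha| \;\geq\; \bigl|\log(1+\|T(x)-T(x')\|^2) - \log(1+\|x-x'\|^2)\bigr| \;=\; |c_Y(T(x),T(x'))-c_X(x,x')|.
\]
Therefore $K^c = \{(x,x')\in\mathcal{M}^2 : |c_X(x,x')-c_Y(T(x),T(x'))|>|\log\alpha|\}$.

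For the measure bound I would apply Markov's inequality to the nonnegative random variable $Z(x,x'):=|c_X(x,x')-c_Y(T(x),T(x'))|^2$ under $\mu\otimes\mu$: since $\mathbb{E}_{\mu^2}[Z] = \operatorname{GM}(T,\mu)<\epsilon$, we get
\[
\mu^2[K^c] \;=\; \mu^2\!\bigl(\{Z>(\log\alpha)^2\}\bigr) \;\leq\; \frac{\operatorname{GM}(T,\mu)}{(\log\alpha)^2} \;<\; \frac{\epsilon}{(\log\alpha)^2},
\]
yielding the first claim by complementation.

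For the bi-Lipschitz conclusion on well-separated pairs $(x,x')\in K$, I would just unwind the inequalities defining $K$: multiplying through gives
\[
\alpha(1+\|x-x'\|^2) - 1 \;\leq\; \|T(x)-T(x')\|^2 \;\leq\; \tfrac{1}{\alpha}(1+\|x-x'\|^2) - 1.
\]
Rearranging the upper bound, I need $\tfrac{1-\alpha}{\alpha}\leq \gamma\|x-x'\|^2$, i.e.\ $\|x-x'\|^2\geq \tfrac{1-\alpha}{\alpha\gamma}$, to absorb the constant $\tfrac{1-\alpha}{\alpha}$ into the term $\gamma\|x-x'\|^2$; symmetrically, the lower bound $\alpha-1+\alpha\|x-x'\|^2 \geq \alpha(1-\gamma)\|x-x'\|^2$ reduces to the same separation condition $\|x-x'\|^2\geq \tfrac{1-\alpha}{\alpha\gamma}$. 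Both inequalities in \eqref{eq:modified_bi_lip} then follow.

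\textbf{Main obstacle.} There is no deep obstacle here; the argument is essentially a Chebyshev inequality combined with elementary algebra. The only subtlety worth flagging is that the result is delicate at \emph{small} scales: the additive $+1$ inside the logarithms of $c_X,c_Y$ means that small inter-point distances get compressed by the cost, so one cannot hope for a genuine bi-Lipschitz bound on all of $K$ without the separation hypothesis $\|x-x'\|^2\geq \tfrac{1-\alpha}{\alpha\gamma}$. Choosing the logarithm form of the cost (rather than the squared distance) is what makes the measure bound clean, since $|\log\alpha|$ appears naturally as the threshold; this choice is the one nontrivial design decision but the computation itself is direct.
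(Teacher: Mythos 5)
Your proposal is correct and follows essentially the same approach as the paper: the paper proves the measure bound by splitting $K^c$ into the two "bad" sets $B$ (ratio $\geq 1/\alpha$) and $Q$ (ratio $\leq \alpha$) and lower-bounding the integrand on each, which is exactly an unpacked version of your Markov/Chebyshev bound on $Z = |c_X - c_Y|^2$; the algebraic rearrangement for the bi-Lipschitz estimate on well-separated pairs is identical.
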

\begin{proof}
    Define the sets
    \begin{align*}
        B &= \left\{ (x,x') \in \mathcal{M}^2: \ \frac{\|T(x) - T(x')\|^2+1}{\|x-x'\|^2 + 1} \geq \frac{1}{\alpha} \right\}
        ,\\
        Q &= \left\{ (x,x') \in \mathcal{M}^2: \ \frac{\|T(x) - T(x')\|^2+1}{\|x-x'\|^2 + 1} \leq \alpha \right\}.
    \end{align*}
    Using the definitions of $\operatorname{GM}(T,\mu)$, $B$, the following equivalent definition of $Q$: $$Q \equiv \left\{ (x,x') \in \mathcal{M}^2: \ \frac{\|x-x'\|^2+1}{\|T(x) - T(x')\|^2 + 1} \geq 1/\alpha \right\},$$
    and the definition of $K$, we obtain 
    \begin{align*}
        \operatorname{GM}(T,\mu) 
        &= \int_{\mathcal{M}^2} \left(\log\left(\frac{\|T(x)-T(x')\|^2+1}{1+\|x-x'\|^2}\right)\right)^2 d\mu d\mu\\
        &\geq   (\log(1/\alpha))^2 \int_{\mathcal{M}^2 \cap B} d\mu d\mu + (\log(1/\alpha))^2 \int_{\mathcal{M}^2 \cap Q} d\mu d\mu\\
        &\geq  (\log \alpha)^2 \left(\int_{\mathcal{M}^2 \cap B} d\mu d\mu +  \int_{\mathcal{M}^2 \cap Q} d\mu d\mu\right)\\
        &\geq  (\log \alpha)^2 \left(1 - \int_{K} d\mu d\mu\right).
    \end{align*}
   Consequently,
    \begin{align*}
        \mu^2[K] \geq 1 -  \frac{\operatorname{GM}(T,\mu) }{(\log \alpha)^2} \geq 1 - \frac{\epsilon}{(\log \alpha)^2}.
    \end{align*}
    
    Next, we prove \eqref{eq:modified_bi_lip}. Note that the defining condition of $K$ can be expressed as 
    \begin{equation}\label{eq:K-alt}
        \alpha \|x-x'\|^2 - (1-\alpha)
        \leq \|T(x) -T(x')\|^2 
        \leq \frac{1}{\alpha}\|x-x'\|^2 + \left(    \frac{1}{\alpha}- 1 \right).   
    \end{equation}
    Fix $0<\gamma<1$ and $x,x'\in K$ satisfying $\|x-x'\|^2\geq\frac{1-\alpha}{\alpha\gamma}$. From the lower bound in~\eqref{eq:K-alt},
    \[
    \alpha \|x-x'\|^2 - (1-\alpha)
    \geq
    \alpha \|x-x'\|^2 - \alpha \gamma \|x-x'\|^2
    =
    \alpha (1 -  \gamma \|)x-x'\|^2.
    \]
    The upper bound can be shown similarly:
    \[
    \frac{1}{\alpha }\|x-x'\|^2 + \left(\frac{1}{\alpha} - 1\right)
    \leq
    \frac{1}{\alpha }\|x-x'\|^2
    +
    \gamma\|x-x'\|^2
    \leq \left(\frac{1}{\alpha} +\gamma\right) \|x-x'\|^2.
    \]
\end{proof}

\begin{remark}
   For sufficiently small $\gamma$ and $\epsilon$, $\alpha$ can be chosen close to one such that $\frac{1-\alpha}{\alpha \gamma}$ is close to zero  and both $\mu^2[K]$ and the bi-Lipschitz constant of \eqref{eq:modified_bi_lip} are close to 1.
\end{remark}

\Cref{fig:GME-cost-ex} tests whether our GME-based encoder is geometry-preserving in practice, while comparing it to the encoder of VAE, which is the same encoder of VAEGAN.   
This figure plots 
\begin{equation}
\label{eq:Lip_const_for_T}
{\|T(x)-T(x')\|}/{\|x-x'\|}    
\end{equation}
as a function of $\|x-x'\|$. 
It uses both the MNIST and CIFAR10 datasets and its latent space is $\mathbb{R}^{100}$, which is a common choice by GAN-based models for both datasets.

\begin{figure}[!h]
    \centering
    \begin{subfigure}[Ours (MNIST)]{\includegraphics[height=0.142\textheight]{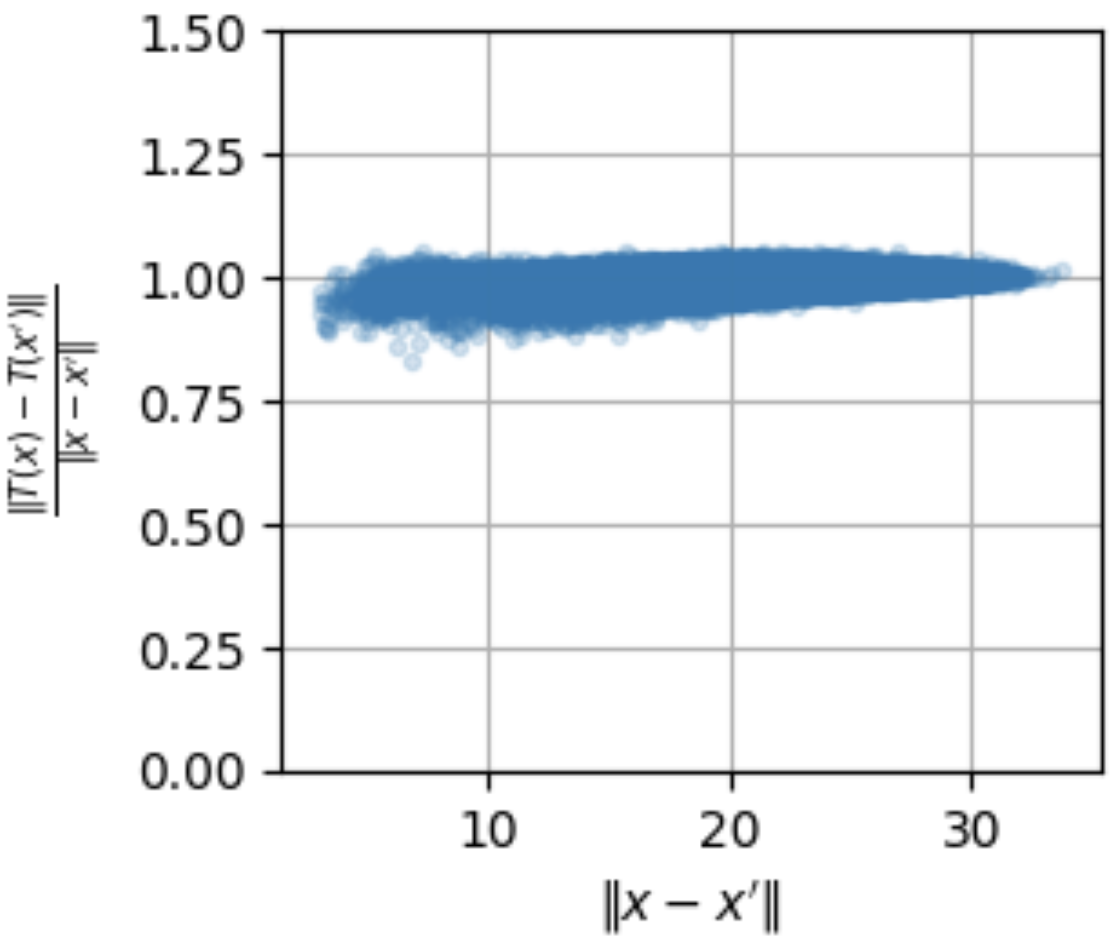}}
    \end{subfigure}
    \begin{subfigure}[VAE (MNIST)]{\includegraphics[height=0.142\textheight]{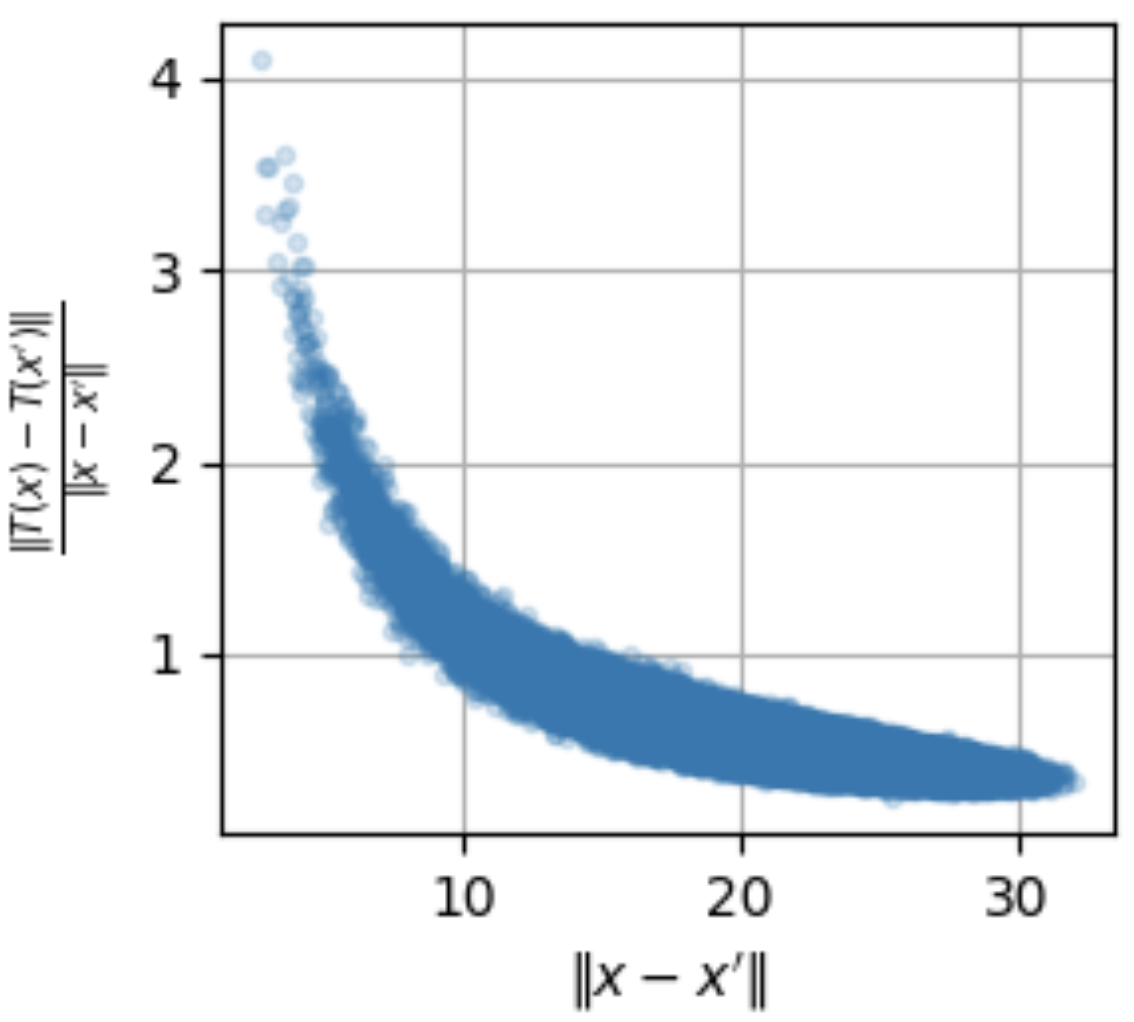}}
    \end{subfigure}
    \begin{subfigure}[Ours (CIFAR10)]{\includegraphics[height=0.142\textheight]{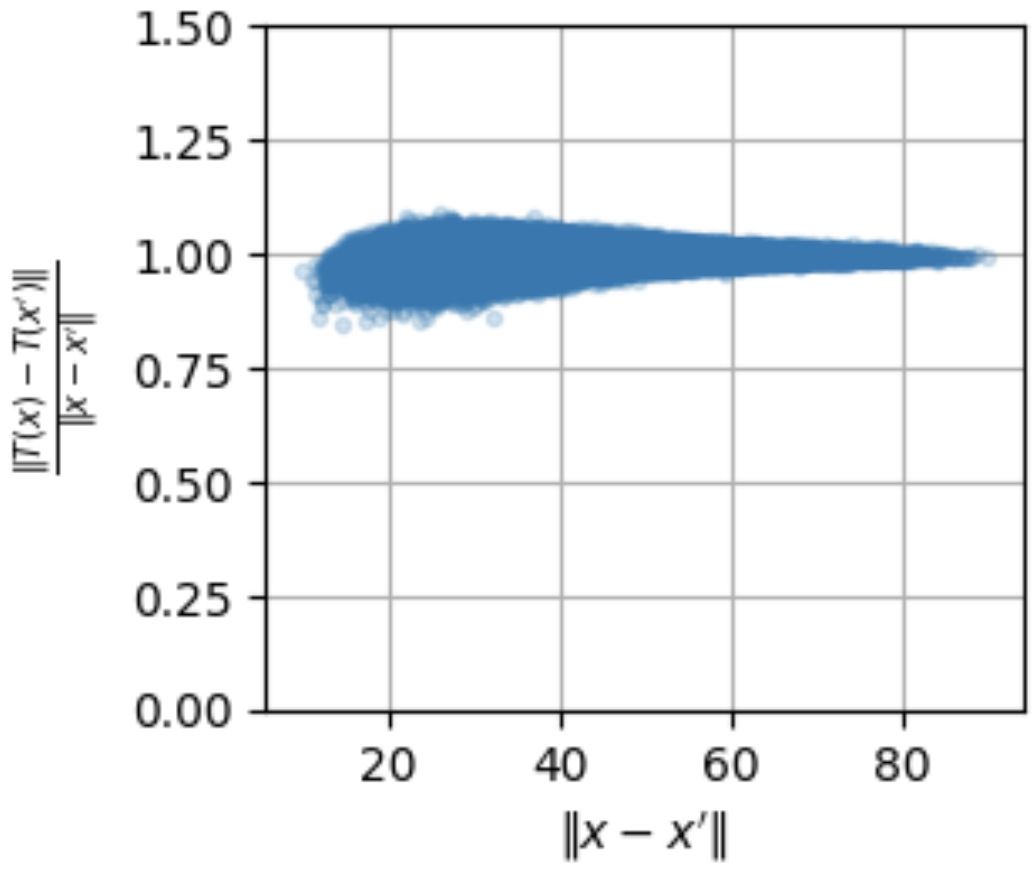}}
    \end{subfigure}
    \begin{subfigure}[VAE (CIFAR10)]{\includegraphics[height=0.142\textheight]{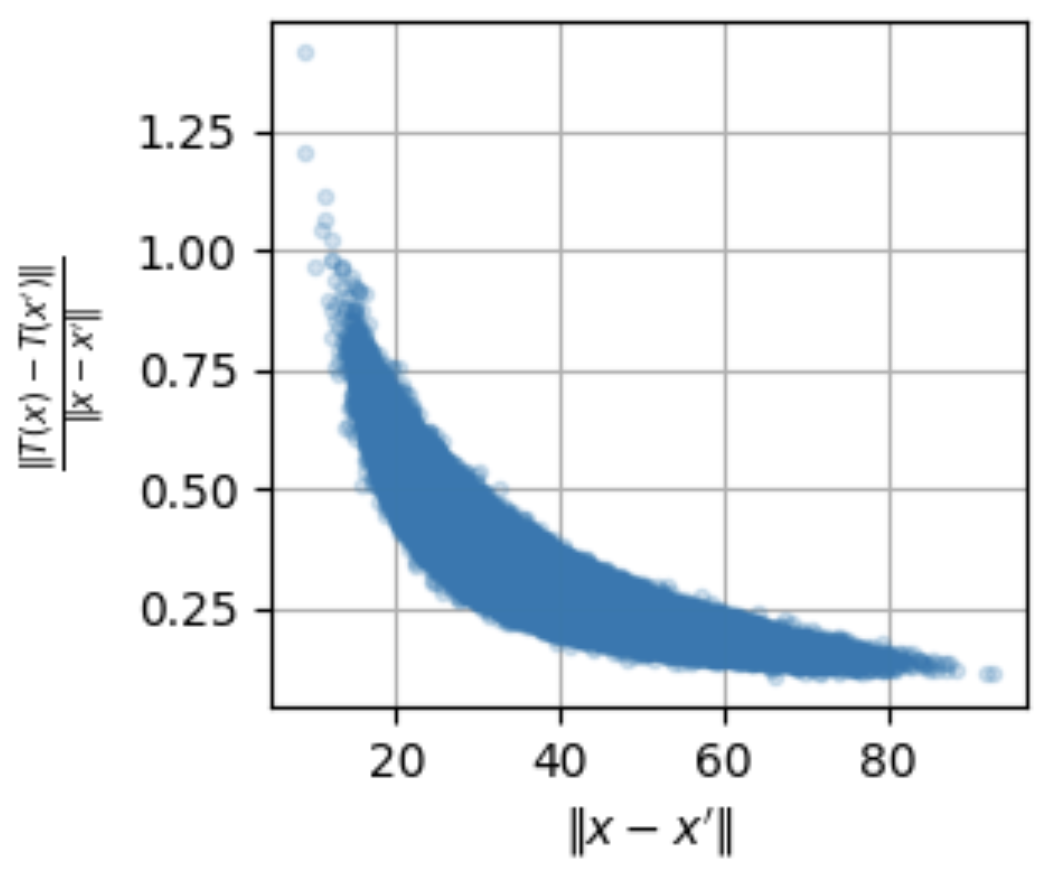}}
    \end{subfigure}
    \caption{Scatter plots depicting the ratio  $\frac{\|T(x)-T(x')\|}{\|x-x'\|}$ versus 
    $\|x-x'\|$ for encoders $T$ obtained by our GME-based method and VAE. They are applied to both the MNIST and CIFAR10 datasets. We use the commonly implemented latent space for these datasets,  $\mathbb{R}^{100}$. Clearly, our encoder is geometry-preserving, unlike the VAE encoder.
    }
    \label{fig:GME-cost-ex}
\end{figure}

We note that the ratio in \eqref{eq:Lip_const_for_T} is close to 1 for our encoder and the higher ${\|x-x'\|}$ the closer it is to 1. That is, our encoder $T$ is geometry-preserving, and this property is more emphasized for well-separated points. For VAE, this ratio significantly varies with ${\|x-x'\|}$ and is often away from 1.  That is, $T$ in VAE does not preserve the underlying geometry. 

Since we noticed in this and other experiments that $T$ is in practice bi-Lipschitz and consequently it is invertible, we assume in the next section the invertibility of $T$.

\section{Generation via Geometry-Preserving Maps}\label{sec:gen}

We extend the mathematical foundations of our generative model. 
\Cref{sec:ot} reviews some notions of optimal transport and use them to show that our proposed generator $G$ can be presented as an optimal map for a special optimal transport cost, depending on the geometry-preserving generator $T$, which satisfies the desired relationship: $G^*_\#\nu = \mu$. 
\Cref{sec:ccm-map} establishes the $c$-cyclical monotonicity of the generative map and a regularity property of the discriminator.

\subsection{Optimal Transport}\label{sec:ot}

Given a cost function $c: {\mathcal{M}} \times {Y} \rightarrow \mathbb{R}$, the corresponding OT cost between $\mu$ and $\nu$ takes the form
\begin{equation}\label{eq:ot}
\operatorname{OT}_{c}(\mu,\nu) := \inf_{\pi \in \Pi(\mu,\nu)} \mathop{\Exp}_{(x,y) \sim \pi}[c(x,y)].
\end{equation}
For ${D} = {d}$, $p \in [1, \infty)$ and $c_p(x,y)=\|x-y\|^p/p \ $, the Wasserstein-$p$ distance is defined by 
$$W_p^p(\mu,\nu) = \sqrt[p]{{\operatorname{OT}_{c_p}(\mu,\nu)}}.$$

The following lemma implies that for the cost function 
\begin{equation}
\label{eq:def_c}
c_T(x,y) = \|T(x) - y\|^2/2 \quad \forall x \in \mathcal{M} \ \text{ and } \ y \in Y,
\end{equation}
defined with respect to an invertible  encoder $T$,  the above OT cost,  $\operatorname{OT}_{c_T}(\mu,\nu)$, coincides with  $W^2_2(T_\#\mu, \nu)$, which is used in our formulation in \eqref{eq:def_R}.
\begin{lemma}\label{lem:ot-W}
    If $T: \mathcal{M} \rightarrow Y$ is invertible, $\nu$ is absolutely continuous and $c_T$ is defined in \eqref{eq:def_c}, then
    \[
        W^2_2(T_\#\mu, \nu) = \operatorname{OT}_{c_T}(\mu,\nu).
    \]
\end{lemma}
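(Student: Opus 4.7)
The plan is to prove the equality through a measure-preserving bijection between the two sets of admissible transport plans, driven by the invertibility of $T$. Define the map
\[
\Phi : \Pi(\mu,\nu)\to \Pi(T_\#\mu,\nu), \qquad \Phi(\pi)=(T\times \mathrm{Id}_Y)_\#\pi,
\]
so that $\Phi(\pi)$ is the joint law of $(T(X),Y)$ when $(X,Y)\sim\pi$. First I would verify that $\Phi(\pi)$ has the correct marginals: its first marginal is $T_\#\mu$ because the first marginal of $\pi$ is $\mu$, and its second marginal remains $\nu$ because $\mathrm{Id}_Y$ acts trivially on the $Y$–coordinate. Hence $\Phi$ is well defined.

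Next, I would use the invertibility of $T$ (with $T^{-1}$ Borel measurable, which is standard since $T$ is a homeomorphism onto its image on the compact manifold $\mathcal{M}$) to define the inverse map
\[
\Psi : \Pi(T_\#\mu,\nu)\to \Pi(\mu,\nu), \qquad \Psi(\tilde\pi)=(T^{-1}\times \mathrm{Id}_Y)_\#\tilde\pi,
\]
and check in the same way that its first marginal is $\mu$ and its second is $\nu$. A direct computation using $(T^{-1}\times\mathrm{Id})\circ(T\times\mathrm{Id})=\mathrm{Id}_{\mathcal{M}\times Y}$ and vice versa gives $\Psi\circ\Phi=\mathrm{Id}$ and $\Phi\circ\Psi=\mathrm{Id}$, so $\Phi$ is a bijection between the two sets of couplings.

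Third, I would invoke the change–of–variables formula for pushforward measures to see that the costs match under $\Phi$: for any $\pi\in\Pi(\mu,\nu)$,
\[
\mathop{\Exp}_{(x,y)\sim \pi}\!\left[c_T(x,y)\right]
=\mathop{\Exp}_{(x,y)\sim \pi}\!\left[\tfrac{1}{2}\|T(x)-y\|^2\right]
=\mathop{\Exp}_{(u,y)\sim \Phi(\pi)}\!\left[\tfrac{1}{2}\|u-y\|^2\right].
\]
Taking the infimum over $\pi\in\Pi(\mu,\nu)$ on the left and using the bijectivity of $\Phi$ to rewrite it as the infimum over $\tilde\pi\in\Pi(T_\#\mu,\nu)$ on the right yields
\[
\operatorname{OT}_{c_T}(\mu,\nu)=\operatorname{OT}_{c_2}(T_\#\mu,\nu)=W_2^2(T_\#\mu,\nu),
\]
as desired.

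There is essentially no serious obstacle in this argument; it is a one–line change of variables once the bijection $\Phi$ is set up. The only point that deserves care is the measurability of $T^{-1}$ (so that $\Psi$ is actually well defined on plans), which is not a real issue under the implicit regularity of $T$ throughout the paper. The hypothesis that $\nu$ is absolutely continuous is not needed at the Kantorovich level treated here and is presumably recorded for downstream use (e.g., when invoking Brenier–type results to realise the optimal plan as a Monge map).
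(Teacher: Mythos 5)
Your proof takes essentially the same approach as the paper: you transfer plans via the pushforward under $T\times\mathrm{Id}_Y$ (the paper writes this as $\pi'(z,y)=\pi(T^{-1}(z),y)$), check marginals, and apply change of variables to the cost. The one small improvement you offer is that you explicitly verify the map on coupling sets is a bijection via its inverse $(T^{-1}\times\mathrm{Id}_Y)_\#$; the paper only checks the forward direction, even though equality of the two infima really uses surjectivity onto $\Pi(T_\#\mu,\nu)$, so your version closes a minor gap in the exposition. Your remark that absolute continuity of $\nu$ plays no role at the Kantorovich level and is carried for later Brenier-type use is also accurate.
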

\begin{proof}
    
Let $\pi \in\Pi(\mu,\nu)$. First, we show that the coupling $\pi'$ defined by
\begin{equation}\label{eq:two-pi}
    {\pi}'(z,y) = {\pi}(T^{-1}(z),y),\quad \forall (z,y) \in Y^2
\end{equation}
satisfies
$\pi'\in\Pi(T_\#\mu,\nu)$.
    From the definition of $\pi'$, for any function $f:Y\rightarrow \mathbb{R}$,
    \begin{multline*}
        \int_Y f(z) dT_\#\mu(z) = \int_\mathcal{M} f(T(x)) d\mu(x)
        = \int_{\mathcal{M}\times Y} f(T(x)) d\pi(x,y)\\
        = \int_{Y^2} f(z) d\pi(T^{-1}(z),y)
        = \int_{Y^2} f(z) d\pi'(z,y).
    \end{multline*}
    For any function $g:Y\rightarrow \mathbb{R}$,
    \begin{align*}
        \int_Y g(y) d\nu(y) &= \int_{\mathcal{M}\times Y} g(y) d\pi(x,y) = \int_{Y^2} g(y) d\pi'(z,y).
    \end{align*}
Therefore, $\pi'\in\Pi(T_\#\mu,\nu)$. Applying~\eqref{eq:two-pi}, we conclude the lemma as follows:
\begin{align*}
    \operatorname{OT}_{c_T}(\mu,\nu) &= \inf_{\pi \in \Pi(\mu,\nu)} \int_{\mathcal{M}\times Y} c_T(x,y) d\pi(x,y)\\
    &= \inf_{\pi' \in \Pi(T_\#\mu,\nu)} \int_{Y \times Y} \frac{\|y'-y\|^2}{2} d\pi'(y',y)
    = W^2_2(T_\#\mu,\nu).
\end{align*}
\end{proof}
Lemma~\ref{lem:ot-W} is crucial for transitioning from solving distances between $T_\#\mu$ and $\nu$ to distances between $\mu$ and $\nu$. Using this formulation, the following theorem proves the existence of a minimizer for~\eqref{eq:ot} with the cost function in \eqref{eq:def_c} defined with respect to the encoder $T$. Moreover, it shows that this minimizer, an OT plan, is induced by an OT map $G$ such that $G_\#\nu=\mu$, where $G$ is the desired generator. 
\begin{theorem}\label{thm:existence}
    If $T:\mathcal{M}\rightarrow Y$ is invertible and $c(x,y)$ is defined in \eqref{eq:def_c}, then the OT$_c$ cost~\eqref{eq:ot} admits a minimizer $\pi^* \in \Pi(\mu,\nu)$ and the OT plan $\pi^*$ is induced by an OT map $G^*:Y\rightarrow \mathcal{M}$ satisfying $G^*_\#\nu = \mu$. Furthermore, $G^*$ takes the form of $G^* = T^{-1} \circ R^*$ where $R^*:Y\rightarrow Y$ is the minimizer of \eqref{eq:def_R}.
\end{theorem}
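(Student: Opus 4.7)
The plan is to reduce the existence of an OT minimizer on the heterogeneous product $\mathcal{M}\times Y$, where the cost $c_T$ depends on the encoder, to the classical Brenier existence theorem for quadratic optimal transport on $Y\times Y$. First, I would invoke Lemma~\ref{lem:ot-W} to rewrite
\[
\operatorname{OT}_{c_T}(\mu,\nu) = W_2^2(T_\#\mu,\nu),
\]
so that the heterogeneous problem is identified with a standard quadratic transport problem between the absolutely continuous measure $\nu$ and the latent pushforward $T_\#\mu$.

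Next, since $\nu$ is absolutely continuous, I would appeal to Brenier's theorem applied to the minimization problem in \eqref{eq:def_R}: this yields an optimal transport map $R^*:Y\rightarrow Y$ satisfying $R^*_\#\nu=T_\#\mu$ that attains $W_2^2(T_\#\mu,\nu)$. Using the invertibility of $T$, I then set the candidate generator $G^*:=T^{-1}\circ R^*$ and verify via the composition rule for pushforwards that
\[
G^*_\#\nu = (T^{-1})_\#(R^*_\#\nu) = (T^{-1})_\#(T_\#\mu) = \mu,
\]
which simultaneously exhibits the factorization $G^*=T^{-1}\circ R^*$ claimed in the theorem and establishes the desired pushforward property $G^*_\#\nu=\mu$.

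To produce the optimal transport plan, I would set $\pi^*:=(G^*,\mathrm{id}_Y)_\#\nu$ as a measure on $\mathcal{M}\times Y$. Its marginals are immediate: the first is $G^*_\#\nu=\mu$ and the second is $\nu$, so $\pi^*\in\Pi(\mu,\nu)$, and by construction $\pi^*$ is concentrated on the graph of $G^*$. Optimality then follows from the chain of equalities
\[
\int_{\mathcal{M}\times Y} c_T(x,y)\,d\pi^*(x,y) = \int_Y \frac{\|T(G^*(y))-y\|^2}{2}\,d\nu(y) = \int_Y \frac{\|R^*(y)-y\|^2}{2}\,d\nu(y) = W_2^2(T_\#\mu,\nu),
\]
combined with the identification in Lemma~\ref{lem:ot-W}; hence $\pi^*$ achieves the infimum $\operatorname{OT}_{c_T}(\mu,\nu)$ and is induced by the map $G^*$.

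The main obstacle is the invocation of Brenier's theorem, which requires $\nu$ to be absolutely continuous (satisfied here, for instance, when $\nu$ is the standard Gaussian) and produces $R^*$ uniquely as the gradient of a convex potential. A secondary bookkeeping task is to confirm that the correspondence $\pi\leftrightarrow\pi'$ already used in the proof of Lemma~\ref{lem:ot-W} is a cost-preserving bijection between $\Pi(\mu,\nu)$ and $\Pi(T_\#\mu,\nu)$, so that optimality for the latter problem transfers back to the former; the invertibility of $T$ makes this routine, and no additional regularity assumptions on $\mathcal{M}$, $T$, or $R^*$ beyond those already present in the paper are required.
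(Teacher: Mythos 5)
Your proof is correct and follows essentially the same route as the paper's: reduce to $W_2^2(T_\#\mu,\nu)$ via Lemma~\ref{lem:ot-W}, invoke Brenier's theorem to obtain $R^*$, define $G^*=T^{-1}\circ R^*$, and verify the pushforward and optimality properties. The only cosmetic difference is that you verify $G^*_\#\nu=\mu$ by the composition rule for pushforwards and explicitly exhibit the plan $\pi^*=(G^*,\mathrm{id}_Y)_\#\nu$, whereas the paper checks the pushforward directly on Borel sets and leaves the plan implicit; both are routine and equivalent.
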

\begin{proof}
By Lemma~\ref{lem:ot-W}, $\operatorname{OT}_{c_T}(\mu,\nu) = W^2_2(T_\#\mu,\nu)$ and by Brenier's theorem~\cite{brenier1987}, $W^2_2(T_\#\mu,\nu)$ admits an optimal transport map $R^*$ satisfying $R^*_\#\nu = T_\#\mu$ (i.e., $R^*$ is the minimizer of \eqref{eq:def_R}). We define $G^* := T^{-1} \circ R^*$. Using the definition of pushforward measures and the constraint $R^*_\#\nu = T_\#\mu$, we obtain for any Borel measurable set $A \subset X$, 
    \begin{align*}
        G^*_\#\nu(A) &= (T^{-1}\circ R^*)_\# \nu(A)\\
        &= \nu\big( (T^{-1}\circ R^*)^{-1}(A) \big) = \nu\big( (R^*)^{-1} \circ T(A) \big)\\
        &= R^*_\# \nu\big( T(A) \big) = T_\# \mu\big( T(A) \big)\ = \mu(A).
    \end{align*}
    Therefore, the  map $G^*$ satisfies $G^*_\# \nu = \mu$. 
    
    We conclude the proof by verifying that $G^*$ is an optimal transport map for $\operatorname{OT}_{c_T}(\mu,\nu)$:
    \begin{align*}
        &\operatorname{OT}_{c_T}(\mu,\nu) = \\
        & W^2_2(T_\#\mu,\nu)
        = \int_Y \|R^*(y) - y\|^2 d\nu(y)
        = \int_Y \|T\circ G^*(y) - y\|^2 d\nu(y)
        = \int_Y c_T(G^*(y),y) d\nu(y).
    \end{align*}
\end{proof}

By \Cref{thm:existence}, the OT problem~\eqref{eq:ot} with $c_T$ chosen in \eqref{eq:def_c} and $T$ invertible, can be written as a minimization with respect to the OT map $G$:
\begin{equation}\label{eq:ot-c-G}
    \operatorname{OT}_{c_T}(\mu,\nu) = \min_{\substack{G :Y \rightarrow \mathcal{M}\\G_\#\nu=\mu}} \displaystyle \mathop{\Exp}_{y \sim \nu}[\|T\circ G(y)-y\|^2 / 2].
\end{equation}
Following a standard duality argument, we introduce a dual variable $\psi:\mathcal{M} \rightarrow \mathbb{R}$ such that
\begin{align}\label{eq:ot-minimax}
    \operatorname{OT}_{c_T}(\mu,\nu) &=\min_{\substack{G :Y \rightarrow \mathcal{M}}} \max_{\psi: \mathcal{M}\rightarrow \mathbb{R}} \displaystyle \mathop{\Exp}_{y \sim \nu}[\|T \circ G(y)-y\|^2 / 2] 
    + \mathop{\Exp}_{y \sim \nu} [\psi(G(y))] - \displaystyle \mathop{\Exp}_{x \sim \mu} [\psi(x)].
\end{align}
In the practical implementation $\psi$ serves as the discriminator.

\subsection{Regularity of the Generator and Discriminator}\label{sec:ccm-map}

We first establish the $c_T$-cyclical monotonicity ($c_T$-CM) property (see Definition~\ref{def:c-cm}) of the generative map $G$. 
\begin{theorem}[$c_T$-Cyclical Monotonicity of $G$]\label{thm:c-cm}
    If $\nu \in {\cal P}(Y)$ are absolutely continuous, $T$ is invertible, 
    $R:Y\rightarrow Y$ is the minimizer of \eqref{eq:def_R} 
    and  $G = T^{-1}\circ R$, then the set
      $\Gamma = \{ (G(y),y) \in \mathcal{M} \times Y\}$
    is $c_T$-CM.
\end{theorem}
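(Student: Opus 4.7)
The plan is to transfer the cyclical monotonicity of the optimal transport map $R$ (for the standard quadratic cost on $Y\times Y$) into $c_T$-cyclical monotonicity for $G = T^{-1}\circ R$ via the change of variables $z = T(x)$. The definition of $c_T$ in \eqref{eq:def_c} is precisely what makes this substitution clean.

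First, I would invoke Brenier's theorem (already used in the proof of \Cref{thm:existence}) to recall that the optimizer $R$ of \eqref{eq:def_R} is the $W_2^2$-optimal transport map between $\nu$ and $T_\#\mu$, and that its graph $\{(y, R(y)) : y \in Y\}$ is cyclically monotone with respect to the standard quadratic cost $\tilde c(y,z) = \|z-y\|^2/2$. Concretely, for any $k \in \mathbb{N}$, any points $y_1,\dots,y_k \in Y$, and any permutation $\sigma$ of $\{1,\dots,k\}$,
\begin{equation*}
    \sum_{i=1}^k \|R(y_i)-y_i\|^2 \;\leq\; \sum_{i=1}^k \|R(y_{\sigma(i)})-y_i\|^2.
\end{equation*}

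Next, I would set $x_i = G(y_i) = T^{-1}(R(y_i))$, so that $R(y_i) = T(x_i)$ and likewise $R(y_{\sigma(i)}) = T(x_{\sigma(i)})$ (here the invertibility of $T$ assumed in the hypotheses is essential). Substituting into the inequality above yields
\begin{equation*}
    \sum_{i=1}^k \tfrac{1}{2}\|T(x_i)-y_i\|^2 \;\leq\; \sum_{i=1}^k \tfrac{1}{2}\|T(x_{\sigma(i)})-y_i\|^2,
\end{equation*}
which by \eqref{eq:def_c} is exactly $\sum_i c_T(x_i,y_i) \leq \sum_i c_T(x_{\sigma(i)},y_i)$. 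Since $(x_i,y_i) = (G(y_i),y_i) \in \Gamma$ and the finite family and permutation were arbitrary, $\Gamma$ satisfies \Cref{def:c-cm}, i.e.\ $\Gamma$ is $c_T$-CM.

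There is no real obstacle here; the main thing to watch is simply that the proof relies on (i) invertibility of $T$ so that $x_i$ is well-defined from $R(y_i)$, and (ii) the standard fact that Brenier-type optimal maps have cyclically monotone graphs for the quadratic cost. Absolute continuity of $\nu$ is needed only upstream, to apply Brenier's theorem and guarantee existence/uniqueness of $R$; once $R$ is in hand the argument is a direct bookkeeping substitution.
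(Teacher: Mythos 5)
Your proof is correct and follows essentially the same route as the paper's: both reduce $c_T$-cyclical monotonicity of $G$ to the standard cyclical monotonicity of the Brenier map $R$ for the quadratic cost, using the invertibility of $T$ to pass between $T(x_i)$ and $R(y_i)$. The only difference is cosmetic — you start from the inequality for $R$ and substitute forward, while the paper unwinds $c_T(x_{\sigma(i)},y_i)$ first and then invokes the inequality — but the key lemma and the bookkeeping are identical.
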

\begin{proof}
    We arbitrarily fix  $k\in\mathbb{N}$, a permutation $\sigma$, and a finite family of points $(x_1,y_1)$, $\cdots$, $(x_k,y_k)\in\Gamma$. Using the definition of $c$-cyclical monotonicity and the definition of the optimal transport map, we have
    \begin{align*}
      \sum^k_{i=1} c_T(x_{\sigma(i)},y_i)
      = \sum^k_{i=1} \|T(x_{\sigma(i)}) - y_i\|^2
      = \sum^k_{i=1} \|T(G(y_{\sigma(i)})) - y_i\|^2 = \sum^k_{i=1} \|R(y_{\sigma(i)}) - y_i\|^2,
    \end{align*}
    where the last equality comes from the definition of $R = T\circ G$. Since $R$ is an optimal transport map, by Brenier's theorem~\cite{brenier1987}, $R$ is $c_2$-CM map with $c_2(x,y) = \|x-y\|^2/2$. Thus, it satisfies $\sum^k_{i=1} \|R(y_{\sigma(i)}) - y_i\|^2
    \geq\sum^k_{i=1} \|R(y_{i}) - y_i\|^2$ for any perturbation $\sigma$ and $k$. Using this observation, we conclude that $G$ is $c_T$-CM as follows:
    \begin{multline*}
        \sum^k_{i=1} c_T(x_{\sigma(i)},y_i) \geq\sum^k_{i=1} \|R(y_{i}) - y_i\|^2/2
        = \sum^k_{i=1} \|T(G(y_{i})) - y_i\|^2/2\\
        = \sum^k_{i=1} \|T(x_i) - y_i\|^2/2
        = \sum^k_{i=1} c_T(x_i,y_i).
      \end{multline*}
  \end{proof}
  
Next, we establish a regularity property of the discriminator $\psi$ and then interpret it.

\begin{proposition}[Regularity of the discriminator]\label{thm:dual-property}
    If $T: \mathcal{M} \rightarrow Y$ is $(\alpha^{-1})$-bi-Lipschitz and $(f, g)$ is an optimal dual pair of the optimal transport cost $W_2(T_\#\mu, \nu)$ such that $f,g:Y\rightarrow \mathbb{R}$ and
    \begin{align*}
        W_2^2(T_\#\mu, \nu) = \int_Y f(y) \, dT_\#\mu(y) + \int_Y g(z) \, d\nu(z).
    \end{align*}
    Let $\omega: \mathbb{R}_+ \rightarrow \mathbb{R}_+$ be an increasing continuous function with $\omega(0) = 0$ such that
    \begin{align*}
        |f(y) - f(y')| \leq \omega(\|y - y'\|), \quad \forall y, y' \in Y.
    \end{align*}
    Then the optimal dual variable $\psi^*$ of the minimax problem~\eqref{eq:ot-minimax} satisfies
    \[
        |\psi^*(x) - \psi^*(x')| \leq \omega\left( \frac{\|x - x'\|}{\alpha} \right), \quad \forall x, x' \in \mathcal{M}.
    \]
    Thus, if $\alpha=1$, then $\psi^*$ shares the same modulus of continuity of $f$.
\end{proposition}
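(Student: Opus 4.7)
The plan is to identify the optimal discriminator $\psi^*$ of the minimax problem~\eqref{eq:ot-minimax} explicitly as $\psi^*(x) = -f(T(x))$, and then deduce the desired modulus-of-continuity estimate by composing the modulus of $f$ with the upper Lipschitz bound on $T$. Once this identification is in hand, the bound is a one-line calculation.

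I would first invoke \Cref{lem:ot-W} to obtain $\operatorname{OT}_{c_T}(\mu,\nu) = W_2^2(T_\#\mu,\nu)$, and then exchange $\min_G$ and $\max_\psi$ in~\eqref{eq:ot-minimax} via standard Kantorovich strong duality. The inner pointwise minimization over $G(y)\in\mathcal{M}$ produces the $c_T$-conjugate $\psi^{c_T}(y) := \inf_{x\in\mathcal{M}}[c_T(x,y)+\psi(x)]$, turning~\eqref{eq:ot-minimax} into the Kantorovich dual
\begin{equation*}
\operatorname{OT}_{c_T}(\mu,\nu) \;=\; \sup_{\phi_\mu(x)+\phi_\nu(y)\,\leq\, c_T(x,y)} \left[\int_{\mathcal{M}} \phi_\mu\, d\mu + \int_Y \phi_\nu\, d\nu\right],
\end{equation*}
under the correspondence $\phi_\mu = -\psi$ and $\phi_\nu = \psi^{c_T}$. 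I would then lift the optimal $W_2^2(T_\#\mu,\nu)$-dual pair $(f,g)$ on $Y\times Y$, which satisfies $f(y)+g(z)\leq \|y-z\|^2/2$, to a dual pair on $\mathcal{M}\times Y$ by setting $\phi_\mu(x) := f(T(x))$ and $\phi_\nu(y) := g(y)$. Admissibility holds because $f(T(x))+g(y)\leq \tfrac{1}{2}\|T(x)-y\|^2 = c_T(x,y)$, and the pushforward identity $\int f\, dT_\#\mu = \int (f\circ T)\, d\mu$ shows the lifted dual value equals $W_2^2(T_\#\mu,\nu) = \operatorname{OT}_{c_T}(\mu,\nu)$, so the lifted pair is optimal and $\psi^*(x) = -f(T(x))$ is a valid optimal discriminator.

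The claimed bound then follows immediately: for every $x, x'\in\mathcal{M}$,
\begin{equation*}
|\psi^*(x)-\psi^*(x')| \;=\; |f(T(x))-f(T(x'))| \;\leq\; \omega(\|T(x)-T(x')\|) \;\leq\; \omega\!\left(\frac{\|x-x'\|}{\alpha}\right),
\end{equation*}
using the modulus hypothesis on $f$, the monotonicity of $\omega$, and the upper bi-Lipschitz estimate $\|T(x)-T(x')\|\leq\alpha^{-1}\|x-x'\|$ from~\eqref{eq:bi-lip}; the $\alpha = 1$ case reduces to $\omega(\|x-x'\|)$. The main subtlety is the rigorous min-max swap and the identification $\psi^* = -f\circ T$: these rely on standard OT strong duality (applicable since $\nu$ is absolutely continuous and $T$ is invertible, so an optimizer exists), and any potential non-uniqueness of $\psi^*$ can simply be bypassed by designating $\psi^* := -f\circ T$ as the particular optimizer whose regularity the proposition analyzes.
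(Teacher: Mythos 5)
Your proof is correct and rests on the same key idea as the paper's own proof: the optimal discriminator for $\operatorname{OT}_{c_T}(\mu,\nu)$ corresponds to a Kantorovich potential for $W_2^2(T_\#\mu,\nu)$ composed with $T$, after which the modulus bound follows by chaining the modulus of $f$ with the upper Lipschitz estimate $\|T(x)-T(x')\|\leq\alpha^{-1}\|x-x'\|$. The one genuine difference is the direction of the correspondence: you lift the \emph{given} optimal pair $(f,g)$ for $W_2^2(T_\#\mu,\nu)$ to an admissible, value-matching pair for $\operatorname{OT}_{c_T}$ and read off $\psi^*=-f\circ T$, whereas the paper starts from an optimal $\psi^*$ for $\operatorname{OT}_{c_T}$, writes $\psi^*=f\circ T$ (which needs invertibility of $T$), and then verifies that this derived $f$ is a Kantorovich potential for $W_2^2(T_\#\mu,\nu)$. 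Your direction is arguably tidier because it uses the $f$ and $\omega$ from the hypothesis directly, rather than tacitly relying on essential uniqueness of Kantorovich potentials to identify the derived $f$ with the given one, and you explicitly flag how to handle possible non-uniqueness of $\psi^*$ by fixing $-f\circ T$ as the representative (the sign is immaterial for the modulus bound). Both proofs rely on strong Kantorovich duality for the cost $c_T$, which is available under the stated compactness and absolute-continuity assumptions.
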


\begin{proof}
    By optimal transport theory, the cost $\operatorname{OT}_{c_T}(\mu, \nu)$  can be written as a maximization problem with respect to two dual variables $\phi: Y \rightarrow \mathbb{R}$ and $\psi: \mathcal{M} \rightarrow \mathbb{R}$ such that
    \begin{align*}
        \operatorname{OT}_{c_T}(\mu, \nu) &=
        \sup_{\phi, \psi} \left\{ \int_{\mathcal{M}} \psi(x) \, d\mu(x) + \int_Y \phi(y) \, d\nu(y) : \psi(x) + \phi(y) \leq c_T(x, y) \right\}.
    \end{align*}
    Let $(\phi^*, \psi^*)$ be an optimal dual pair satisfying
    \begin{align*}
        \operatorname{OT}_{c_T}(\mu, \nu) &=
        \int_{\mathcal{M}} \psi^*(x) \, d\mu(x) + \int_Y \phi^*(y) \, d\nu(y).
    \end{align*}
    We write $\psi^* = f \circ T$ for some function $f: Y \rightarrow \mathbb{R}$. Then, the above can be written as
    \begin{align*}
        \operatorname{OT}_{c_T}(\mu, \nu) &=
        \int_{\mathcal{M}} f(T(x)) \, d\mu(x) + \int_Y \phi^*(y) \, d\nu(y)
        =\int_{\mathcal{M}} f(y) \, dT_\#\mu(y) + \int_Y \phi^*(y) \, d\nu(y).
    \end{align*}
    Therefore, it follows that $(f,\phi^*)$ is an optimal dual pair for the dual problem of $W^2_2(T_\#\mu, \nu)$. Let $\omega$ be the modulus of continuity of $f$.
    Then,
    \begin{align*}
        |\psi(x) - \psi(x')| = |f(T(x)) - f(T(x'))| \leq \omega(\|T(x) - T(x')\|) \leq \omega\left( \frac{\|x - x'\|}{\alpha} \right).
    \end{align*}
\end{proof}

The modulus of continuity of the discriminator, as indicated in \Cref{thm:dual-property}, depends on the parameter $\alpha$. 
It is known that the modulus of continuity of $f$ is equivalent to that of the quadratic function $\|y-y'\|^2$ due to the use of the Wasserstein-2 distance \cite{santambrogio2015optimal} and furthermore, the function $y\mapsto \frac{1}{2}\|y\|^2 - f(y)$ is convex. 
As $\alpha$ approaches 1, the discriminator achieves the same regularity as the optimal dual variable $f$ of $W^2_2(T_\#\mu, \nu)$, which is the same of the quadratic function $\|y-y'\|^2$. 
The regularity of the discriminator depends on the choice of NN  architecture. When the encoder deviates from preserving the geometry, particularly as the bi-Lipschitz constant $\alpha$ approaches 0, the discriminator becomes more irregular and complex. Consequently, a neural network with more parameters is required to approximate this function accurately. Conversely, a more regular discriminator allows 
less NN parameters.

\section{Algorithm}
\label{sec:alg}

We present our actual numerical algorithm that aims to solve the minimax problem outlined in~\eqref{eq:ot-minimax}. However, 
we need to introduce additional regularization terms. First of all, in order to ensure that the GME cost is sufficiently small and thus $T$ is geometry-preserving, we add  the $\operatorname{GME}$ cost as a regularization parameter with a hyperparameter $\lambda_1$. For the discriminator we add a gradient-penalty term with a hyperparmeter $\lambda_2$. This is a common procedure in OT-based GAN methods. Lastly, we add a reconstruction term with hyperparamter $\lambda_3$, since this is a common practice in encoder-based GANs formulations. 
The common reconstruction term in encoder-based GANs is
\[
\mathbb{E}_{x\sim\mu}\|G \circ T(x) -x \|^2,
\]
which enforces the relationship $G^{-1} = T$. However, in our case $G^{-1} = R^{-1} \circ T$ and in order to form a reconstruction term, we introduce the independent function $R_{inv}$, which approximates $R^{-1}$, and the reconstruction term 
\[
\mathbb{E}_{x\sim\mu}\|G \circ R_{inv} \circ T(x) - x \|^2.
\]

Consequently, our algorithm aims to solve
\begin{align}\label{eq:total-loss-minimax}
    &\min_{\substack{T:\mathcal{M} \rightarrow Y\\G :Y \rightarrow \mathcal{M}\\R_{inv}: Y\rightarrow Y} } \max_{\psi: X\rightarrow \mathbb{R}} \mathcal{L}(G,T,\psi,R_{inv}),
\end{align}
where
\begin{multline}\label{eq:total-loss}
    \mathcal{L}(G,T,\psi,R_{inv}) := \\
    \mathop{\Exp}_{y \sim \nu}[ \|T(G(y))-y\|^2/2 ] + \lambda_1 \operatorname{GM}(T;\mu) + \mathcal{D}(\psi,  \lambda_2) + \lambda_3 \ \mathcal{C}(G,T,R_{inv})
\end{multline}
with the discriminator loss
\[
\mathcal{D}(\psi,  \lambda_2) := \mathop{\Exp}_{y \sim \nu} [\psi(G(y))] - \displaystyle \mathop{\Exp}_{x \sim \mu} [\psi(x)] + \lambda_2 \mathop{\Exp}_{x \sim \mu}[\|\nabla \psi(x)\|^2],
\]
and the reconstruction loss
\[
\mathcal{C}(G,T,R_{inv}) := \mathbb{E}_{x\sim \mu} [\|G({R}_{inv}(T(x))) - x\|^2].
\]

We construct NN functions $G_{\vtheta_1}$, $T_{\vtheta_2}$, $\psi_{\vtheta_3}$, and $(R_{inv})_{\vtheta_4}$ to approximate $G$, a generative map; $T$, a geometry-preserving encoder; $\psi$, a discriminator; and $R_{inv}$, a function to enforce the inverse relation $G \circ (R_{inv} \circ T) = \text{id}$, respectively. Here, $\vtheta_1$, $\vtheta_2$, $\vtheta_3$, and $\vtheta_4$ represent parameter vectors for the respective networks. The resulting algorithm, referred to as the Gromov-Monge Embedding GAN (GMEGAN) is summarized in \Cref{alg:GMEGAN}. While the variable $R_{inv}$ requires an additional neural network in comparison to other encoder-based GAN formulations, we emphasize that in our implementation this neural network is rather simple. Indeed, it is composed of three fully-connected layers with ReLU activation functions. 

\begin{algorithm}[!ht]
   \caption{GMEGAN solving~\eqref{eq:total-loss-minimax}}\label{alg:GMEGAN}
\begin{algorithmic}
   \STATE {\bfseries Input:} A dataset $\{x_i\}^n_{i=1}$ in $X\subset \R^D$; a known distribution $\nu \in \mathcal{P}({Y})$, where ${Y}\subset \R^d$; cost function  $\mathcal{L}$ defined in~\eqref{eq:total-loss}; 
   four NNs whose learned parameters are denoted by $\vtheta_i$, $i=1,2,3,4$; initial NN parameters $(\vtheta_i)_{0}$, $i=1,2,3,4$; regularization 
   parameters $\lambda_i$, $i=1,2,3$; learning rates   $\beta_i$,  $i=1,2,3,4$; minibatch size $m \in \mathbb{N}$
   \STATE {\bfseries Output:} Generative map $G^*_{\vtheta_1}:Y\rightarrow \mathcal{M}$ and a generated distribution $(G^*_{\vtheta_1})_\#\nu$
   \medskip
   \STATE Initialize $\vtheta_i^{(0)} \gets (\vtheta_i)_0$, \  $i=1,2,3,4$
   \FOR{$k=0,1,2,\cdots$}
   \STATE Sample $\{x_i\}^m_{i=1}\subset \{x_i\}^n_{i=1}$ and $\{y_i\}^m_{i=1}$ from $\nu$\;
   \STATE $\vtheta_1^{(k+1)} \gets \vtheta_1^{(k)} - \beta_1 \nabla_{\vtheta_1} \mathcal{L} $
   \STATE $\vtheta_2^{(k+1)} \gets \vtheta_2^{(k)} - \beta_2 \nabla_{\vtheta_2} \mathcal{L}$ 
   \STATE $\vtheta_3^{(k+1)} \gets \vtheta_3^{(k)} + \beta_3 \nabla_{\vtheta_3} \mathcal{L}$
   \STATE $\vtheta_4^{(k+1)} \gets \vtheta_4^{(k)} - \beta_4 \nabla_{\vtheta_4} \mathcal{L}$
   \ENDFOR \hfill
\end{algorithmic}
\end{algorithm}

Our proposed method shares similarities with previous encoder-based GANs, but there are significant differences. First of all, the previous ones aim to find encoders that pushes forward a data distribution to a Gaussian distribution in a latent space, potentially leading to a large upper bound in \Cref{prop:upper-bound} due to a lack of control over the parameter $\alpha$. In contrast, GMEGAN distinguishes itself by incorporating a regularizer that enforces a geometry-preserving encoder. Additionally, the loss function in GMEGAN is derived using a carefully justified OT cost, which implies $c_T$-cyclical monotonicity of the generator and improved modulus of continuity for the  discriminator.  
Another difference is that in previous encoder-based GANs, the generator $G$ is the inverse of the encoder, i.e., $G = T^{-1}$. In our case, the generator $G$ takes the form $G=T^{-1}\circ R$.

Lastly, we remark that that the term $\|T\circ G(y) - y\|^2$ in the loss function~\eqref{eq:total-loss} appears in both VEEGAN and CycleGAN~\cite{zhu2017unpaired}. However, while VEEGAN and CycleGAN introduce this loss term to enforce the inverse relationship $T = G^{-1}$, GMEGAN introduces the term to compute the OT map solving  $\operatorname{OT}_{c_T}(\mu, \nu)$.

\section{Numerical Experiments}\label{sec:exp}

We describe our comprehensive numerical experiments. 
To ensure a fair comparison, we exclusively evaluate our approach against other GAN-based algorithms (including encoder-based GANs) and keep the NN architectures identical whenever possible. 
Our baseline methods include GAN~\cite{goodfellow2014generative}, Wasserstein GAN (WGAN)~\cite{arjovsky2017wasserstein}, Wasserstein GAN with gradient penalty (WGP)~\cite{gulrajani2017improved}, Wasserstein Divergence for GANs (WDIV)~\cite{wu2018wasserstein}, OTM~\cite{rout2022generative}, VAEGAN~\cite{larsen2016vaegan}, and VEEGAN~\cite{srivastava2017veegan}. 

\subsection{Some Experimental Details}\label{appendix:details}

All the experiments were implemented using a GPU server with NVIDIA GeForce RTX 4090 GPUs. Each experiment runs on a single GPU. The cost functions, $c_X$ and $c_Y$, are chosen according to \eqref{eq:cost_function_log}. \Cref{tab:parameters} summarizes the chosen regularization constants, learning rates and batch sizes used for training all methods. 

\begin{table}[h!]
    \centering
    \caption{Parameters used in the experiments}
    \begin{tabular}{lllll}
    \hline
    Experiment & Regularization & Batch  & Learning\\
    & Constants & Size & Rates \\
    \hline
    Artificial example & $\lambda_1=10$, & 16 & $\alpha_1=10^{-4}$,\\
    \,  &  $\lambda_2=1$, &  &   $\alpha_2=10^{-4}$ \\
    &  $\lambda_3=5$ & & \\
     \hline
    CIFAR10,  & $\lambda_1=10$,  & 64 & $\alpha_1=2 \cdot 10^{-4}$, \\
    \, TinyImagenet  & $\lambda_2=1$, &  &  $\alpha_2=10^{-4}$ \\
    &  $\lambda_3=5$ & & \\
    \hline
    \end{tabular}
    \label{tab:parameters}
\end{table}

\subsection{Generating Synthetic Gaussian Mixtures}\label{subsec:toy-example}

We examine an artificial dataset designed to elucidate the notions of monotonicity, mode collapse, and sensitivity to parameter initialization. The input dataset $\{x_i\}^n_{i=1}$, with $n=1,000$, is i.i.d.~sampled from a mixture of spherical Gaussians, predominantly centered in the first two coordinates. We consider two different scenarios. In the first scenario, the data points are in $\mathbb{R}^{100}$ and there are 9 Gaussians and in the second one, the data points are in $\mathbb{R}^{500}$ and there are 12 Gaussians. The covariance matrix of each Gaussian is a diagonal matrix with the first two diagonal entries equal to 0.3 and the remaining diagonal entries are 0.003, ensuring that the data distribution can be effectively embedded in $\mathbb{R}^2$. The latent distribution $\nu$ is a single spherical Gaussian in $\mathbb{R}^2$. 

\Cref{fig:toy_ex_a} illustrates samples from the latent distribution and the input data distribution. 
Samples from the latent distribution are colored according to distances from the origin. Samples from the mixture Gaussian distribution are colored according to the underlying Gaussians. 
\begin{figure}[htb]
    \begin{center}
    \begin{minipage}[b]{0.32\textwidth}
        \centering
        \includegraphics[height=0.18\textheight]{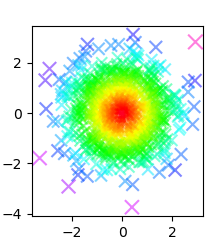}
        \caption*{(a) Latent distribution $\nu$}
    \end{minipage}
    \hfill
    \begin{minipage}[b]{0.32\textwidth}
        \centering
        \includegraphics[height=0.18\textheight]{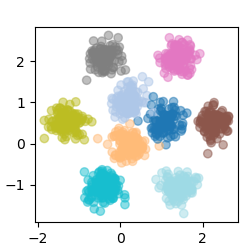}
        \caption*{(b) Dataset in $100$D}
    \end{minipage}
    \hfill
    \begin{minipage}[b]{0.32\textwidth}
        \centering
        \includegraphics[height=0.18\textheight]{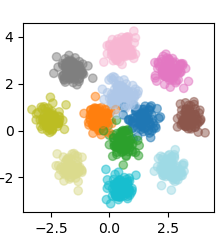}
        \caption*{(c) Dataset in $500$D}
    \end{minipage}
    \caption{
    Illustration of a sample from the latent distribution (left), colored by distances from the origin, an input dataset in $\mathbb{R}^{100}$ with 9 Gaussians (middle) and another input dataset in $\mathbb{R}^{500}$ with 12 Gaussians (right), both colored by Gaussian membership.}
    \label{fig:toy_ex_a}
    \end{center}
\end{figure}


\Cref{fig:toy_ex_b} displays the outcomes of GMEGAN and the seven baseline models for two input datasets of ambient dimensions $100$ (depicted in rows 1 and 2) and $500$ (depicted in rows 3 and 4).  
Rows 1 and 3 show the first two coordinates of the 1,000 generated samples, where for a given generator $G$, each point $G(y_i)$ is colored according to the distance of $y_i$ from the region in the same way as depicted for samples from $\nu$ in the left panel of \Cref{fig:toy_ex_a}. 
Rows 2 and 4 demonstrate the latent distribution, where each $y_i \in \mathbb{R}^2$ is colored based on the ``cluster'' of $G(y_i)$ using the same color scheme depicted in the center and right panels of \Cref{fig:toy_ex_a}.

\begin{figure*}[htb]
    \begin{minipage}[b]{\linewidth}
        \centering
        \includegraphics[width=1\linewidth]{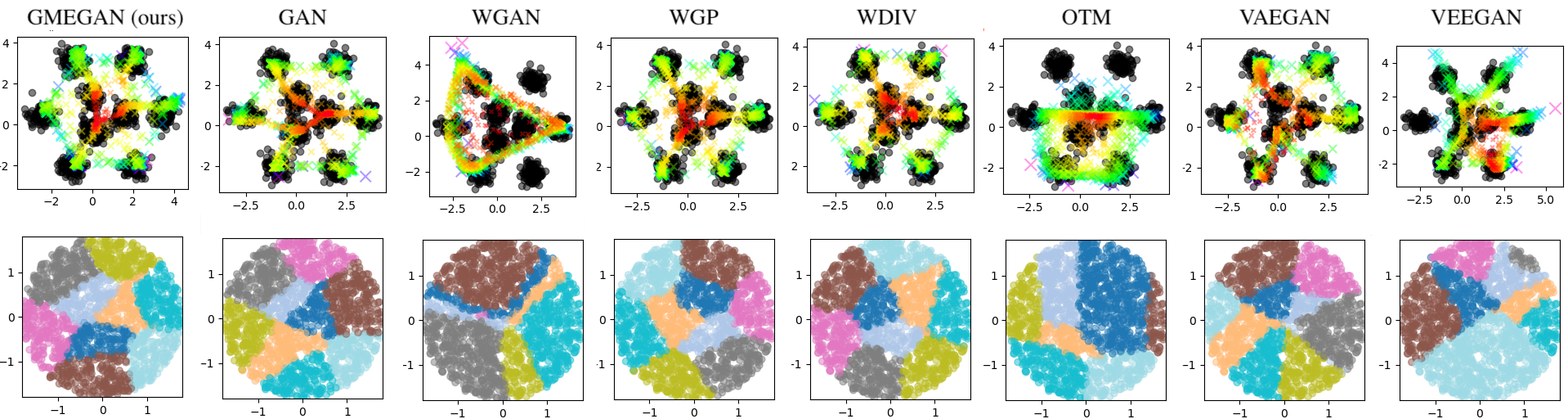}
        \caption*{(a) Results for 9 Gaussians in ${100}$D}
    \end{minipage}
    \begin{minipage}[b]{\linewidth}
        \centering
        \includegraphics[width=1\linewidth]{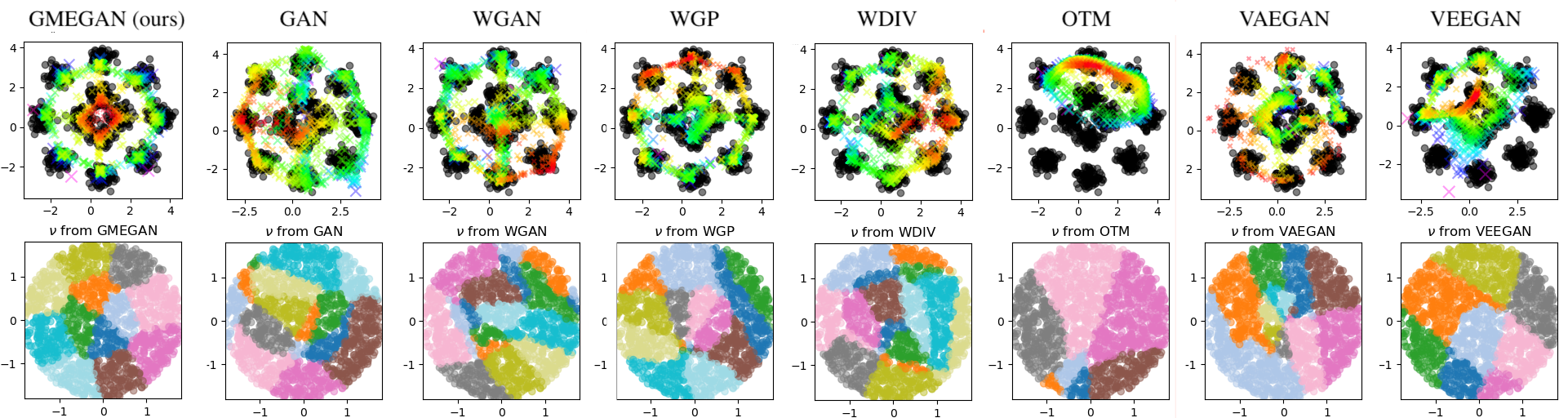}
        \caption*{(b) Results for 12 Gaussians in ${500}$D}
    \end{minipage}
    \hspace{-0.5cm}
    \caption{Results for artificial datasets in 100D and 500D generated by GMEGAN (ours), GAN, WGAN, WGP, WDIV, OTM, VAEGAN, and VEEGAN. Top two rows: 9 Gaussians in $\mathbb{R}^{100}$. Bottom two rows: 12 Gaussians in $\mathbb{R}^{500}$. Rows 1 and 3 show generated samples colored by latent sample location (following the colormap in the left panel of \Cref{fig:toy_ex_a}), with original data shown in black for reference. Rows 2 and 4 display latent representations colored by Gaussian membership, akin to those in the center and right panels of \Cref{fig:toy_ex_a}.}
    \label{fig:toy_ex_b}
\end{figure*}

GMEGAN demonstrates "monotonicity" of its generator, effective latent embedding, and successful mode coverage. While in general, $c_T$-monotonicity is difficult to visualize, the inherently two-dimensional underlying data allows us to approximate $G$ as a map from $\mathbb{R}^2$ to $\mathbb{R}^2$. Consequently, $c_T$-monotonicity and the geometric preservation of the encoder $T$ imply the monotonicity of this approximation of $G$. These ideas are rigorously detailed in Appendix \ref{sec:monotone_G}.

Observing the first and third rows of Figure \ref{fig:toy_ex_b}, we note that the direct monotonicity of the two-dimensional approximation of $G$ is demonstrated by the preservation of the color map in each radial direction, as shown in the left panel of Figure \ref{fig:toy_ex_a}. In contrast, other generators do not exhibit this type of monotonicity for either dataset. It is noteworthy that WGP and WDIV appear to be monotone for the first dataset, where $D=100$.

From the second and fourth rows of this figure, it is evident that GMEGAN provides the best latent representation of the underlying data distribution. It maps the clusters to the latent space while preserving the color arrangement in the center and right panels of \Cref{fig:toy_ex_a}, up to an orthogonal transformation. This highlights the effectiveness of the latent embedding and the generator's monotonicity as well mode coverage. 
In contrast, other methods often fail to capture clusters in the latent space and do not maintain the original color ordering.

Lastly, we observed that GMEGAN's results were insensitive to different parameter initializations, unlike other methods.

\subsection{Generating Real Images}\label{subsec:exp-gen-images}

We conducted experiments by training GMEGAN and several baseline models on two widely used real image datasets: CIFAR10~\cite{cifar10} (comprising 10 image classes) and Tiny ImageNet~\cite{Le2015tinyimagenet} (comprising 200 image classes). Each model was trained for 300 epochs on both datasets until convergence was observed.

To measure the quality of the generated images, we used the Fréchet Inception Distance (FID)~\cite{heusel2017gans}, calculated between the generated samples and the input dataset. We report the average of the 5 lowest FID scores between epochs 250 and 300, using 5,000 randomly sampled real images and 5,000 generated images. The range of 250-300 epochs was chosen because we observed convergence of FID scores for all methods within this range.

To assess mode collapse, we utilized the relative standard deviation of classes, as proposed by~\cite{santurkar2018classification}. For \(K\) classes with \(n_i\) samples generated from the \(i\)-th class, the mean \(\mu\), standard deviation (std) \(\sigma\), and relative std are computed as follows:
\[
    \mu = \sum_{i=1}^K n_i/K, \quad \sigma = \left({\sum_{i=1}^K (n_i - \mu)^2}/{(K-1)}\right)^{1/2} \  \text{and relative std} = \frac{\sigma}{\mu}.
\]
The relative std is calculated after applying a pretrained classifier to the generated examples. A low relative standard deviation indicates well-distributed generated samples across distinct classes. For each GAN-based model, we used the classifier to classify 10,000 generated samples from CIFAR10 and 22,000 from Tiny ImageNet.

To evaluate the robustness of each model, we used 11 distinct neural network architectures for generators, encoders, and discriminators. These architectures varied in factors such as the inclusion or exclusion of batch normalization or fully connected layers, the number of convolutional layers, the number and width of fully connected layers, and the depth of convolutional layers. This variation allowed us to thoroughly assess how these architectural elements influence the effectiveness of GAN-based methods. The box and whisker plots of FID scores and relative standard deviations across the 11 architectures are depicted in \Cref{fig:boxplot-fid} and \Cref{fig:boxplot-sd}, respectively.

\begin{figure}[htb]
\centering
    \begin{subfigure}[CIFAR10]{\includegraphics[width=0.41\textwidth]{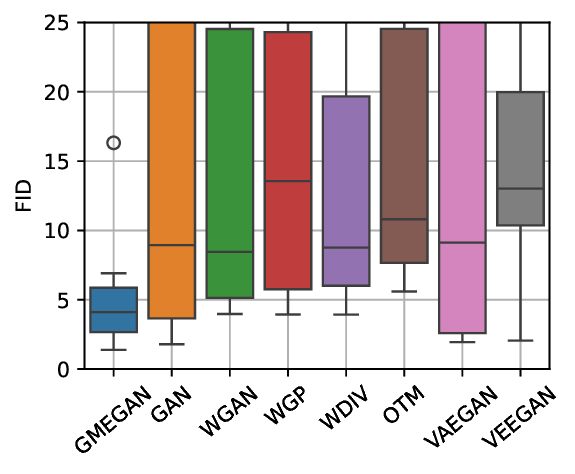}}
    \end{subfigure}
    \begin{subfigure}[Tiny ImageNet]{\includegraphics[width=0.41\textwidth]{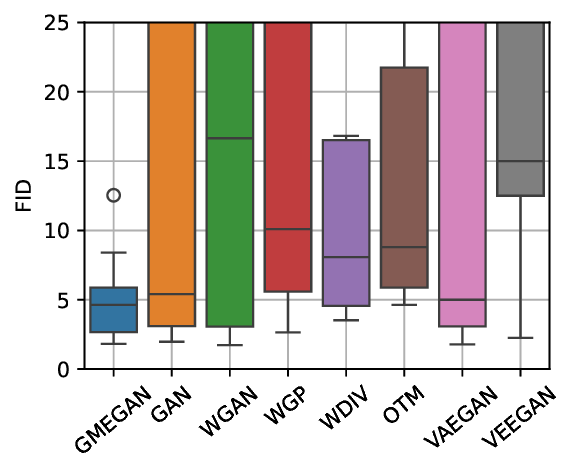}}
    \end{subfigure}
\caption{Box and whisker plots of FID scores obtained from CIFAR10 and Tiny ImageNet using 11 different NN architectures.}
\label{fig:boxplot-fid}
\end{figure}

\begin{figure}[htb]
\centering
    \begin{subfigure}[CIFAR10]{\includegraphics[width=0.41\textwidth]{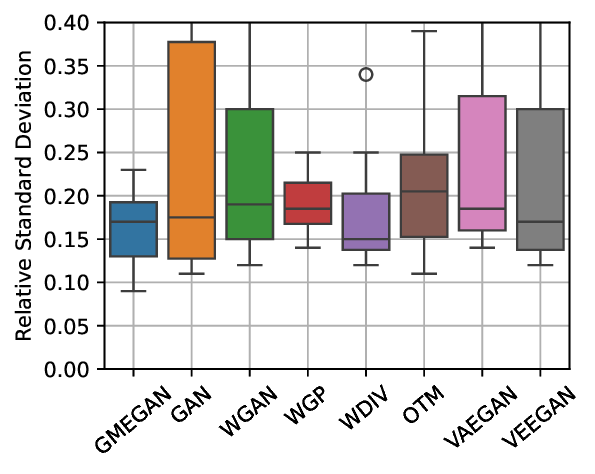}}
    \end{subfigure}
    \begin{subfigure}[Tiny ImageNet]{\includegraphics[width=0.41\textwidth]{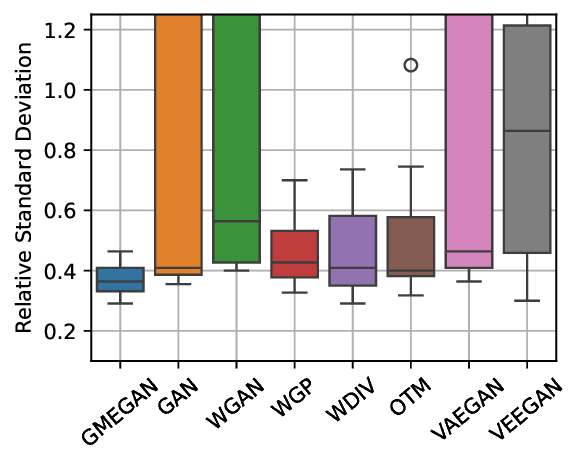}}
    \end{subfigure}
\caption{Box and whisker plots of relative standard deviations obtained from CIFAR10 and Tiny ImageNet datasets using 11 different NN architectures.}
\label{fig:boxplot-sd}
\end{figure}

\begin{figure}[ht!]
    \centering
    \includegraphics[width=0.5\textwidth]{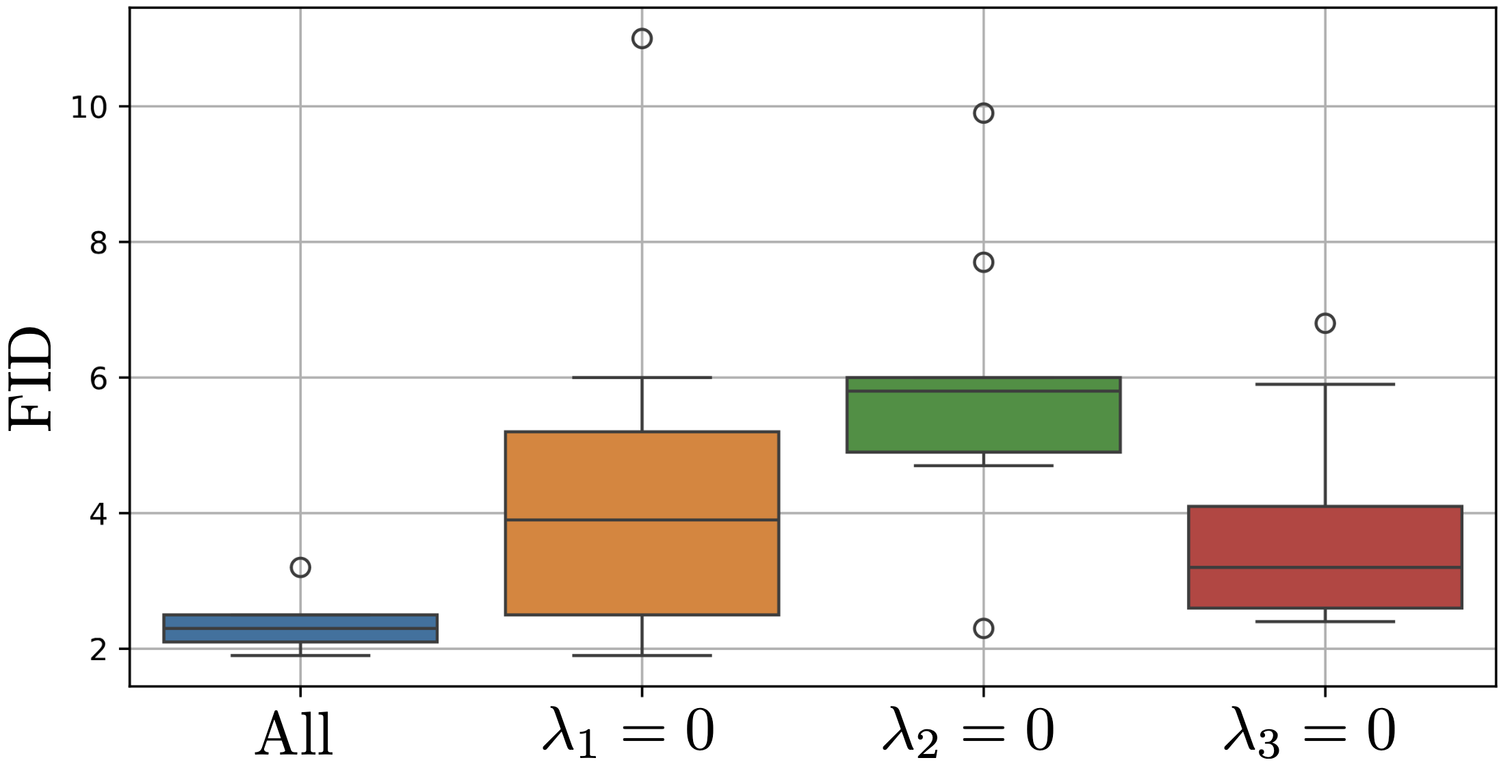}
    \caption{Box and whisker plots of FID scores obtained from GMEGAN (left) and its variants with silent hyperparameters using CIFAR10 with 11 distinct NN architectures.}
    \label{fig:ablation-box-cifar}
\end{figure}

\Cref{fig:boxplot-fid} indicates that GMEGAN outperforms other methods with the lowest mean FID score and, more importantly, the lowest variance among the NN architectures. This suggests that GMEGAN generates the highest quality images and is insensitive to different architectures. \Cref{fig:boxplot-sd} indicates that GMEGAN has the lowest mean relative standard deviation with the lowest variance among the NN architectures. This demonstrates that GMEGAN has the best mode coverage and is the most robust to variations in NN architectures.

Previous works~\cite{goodfellow2016nips,xiang2017effects,lucic2018gans,kurach2019large} have extensively covered the sensitivity of GAN-based and encoder-based GAN models to NN architectures. Surprisingly, GMEGAN exhibits remarkable stability, standing out in contrast to other methods that display significant variance in response to different NN architectures.

\subsection{Ablation Study}\label{sec:exp-abl}

The purpose of this study is to evaluate the impact of different regularization terms on the performance of GMEGAN. \Cref{alg:GMEGAN} incorporates three hyperparameters for various regularization terms in the cost function $\mathcal{L}$, listed in \eqref{eq:total-loss}. Specifically, these are $\lambda_1$ for the GME regularization, $\lambda_2$ for the gradient penalty of the discriminator, and $\lambda_3$ for the reconstruction loss enforcing the inverse relationship $G^{-1} = R_{inv} \circ T$. We conducted experiments on the CIFAR10 dataset to validate the effectiveness of these regularization terms.

\Cref{fig:ablation-box-cifar} compares the FID scores of GMEGAN with three variants, where each of these hyperparameters is set to zero, i.e., $\lambda_i = 0$ for $i=1,2,3$. The results indicate that excluding any of the regularization terms significantly decreases the performance of GMEGAN. 

While all three regularization terms are crucial for the effectiveness of GMEGAN, it appears that $\lambda_3$ may have the least impact on performance. Additionally, removing the GME cost regularization $\lambda_1$ significantly deteriorates the stability of the model.

\section{Conclusion}\label{sec:conclusion}

We introduced a novel method to address key challenges faced by GAN-based approaches, namely mode collapse and training instability. To combat mode collapse, we leveraged a geometry-preserving encoder, which effectively simplifies the minimization problem between two high-dimensional distributions to a lower-dimensional setting. We introduced the new GME cost as a regularization term to enforce a geometry-preserving encoder, and our theoretical analysis guarantees this property.

To address training instability, we utilized the $W_2$ Monge map in the latent space. Our theoretical analysis also guarantees that the resulting generative map is $c_T$-cyclically monotone, where $c_T$ is a cost function defined through the geometry-preserving encoder.

Numerical experiments validate the efficacy of the geometry-preserving encoder and the monotonicity of the generative map. Our approach mitigates mode collapse, demonstrates robustness to parameter initializations and NN architecture selection, and achieves superior performance compared to existing GAN-based (including encoder-based GANs) as evidenced by the lowest FID scores on two common benchmark image datasets.

Future work could explore extending this framework to other types of generative models and further improving computational efficiency.

\subsubsection*{Acknowledgements}
WL acknowledges funding from the National Institute of Standards and Technology (NIST)
under award number 70NANB22H021, YY and DZ acknowledge funding from the Kunshan
Municipal Government research funding, and GL acknowledges funding from NSF award DMS 2124913.

\appendix

\section{Supplemental Details}
We provide some supplemental details to the main text.
\subsection{Proof of  \Cref{prop:upper-bound}}

We first bound $W_p(G_\#\nu, \mu)$ from above
    using the definition of $W_p$ 
as follows:
\begin{align*}
    W_p(G_\#\nu, \mu) &= \left( \min_{\gamma \in \Pi(G_\#\nu,\mu)} \int_{\mathcal{M}^2} \|x-x'\|^p/p  \  d\gamma(x,x')  \right)^{1/p}\\
    &=\left( \min_{\gamma \in \Pi(G_\#\nu,\mu)} \int_{\mathcal{M}^2} \|T^{-1}\circ T(x)-T^{-1}\circ T(x')\|^p/p \  d\gamma(x,x')  \right)^{1/p}\\
    &\leq\frac{1}{\alpha} \left( \min_{\gamma \in \Pi(G_\#\nu,\mu)}  \int_{\mathcal{M}^2} \| T(x)- T(x')\|^p/p \  d\gamma(x,x')  \right)^{1/p}\\
    &\leq\frac{1}{\alpha} \left( \min_{\gamma \in \Pi(T\circ G_\#\nu,T_\#\mu)}  \int_{Y^2} \| y - y'\|^p/p \  d\gamma(y,y')  \right)^{1/p}\\
    &\leq\frac{1}{\alpha} W_p(T\circ G_\#\nu, T_\#\mu).
\end{align*}
Lastly, we bound  $W_p(G_\#\nu, \mu)$  from below as follows:
\begin{align*}
    W_p(G_\#\nu, \mu) 
    &=\left( \min_{\gamma \in \Pi(G_\#\nu,\mu)} \int_{\mathcal{M}^2} \|T^{-1}\circ T(x)-T^{-1}\circ T(x')\|^p/p \  d\gamma(x,x')  \right)^{1/p}\\
    &\geq \alpha \left( \min_{\gamma \in \Pi(G_\#\nu,\mu)}  \int_{\mathcal{M}^2} \| T(x)- T(x')\|^p/p \  d\gamma(x,x')  \right)^{1/p}\\
    &\geq \alpha \left( \min_{\gamma \in \Pi(T\circ G_\#\nu,T_\#\mu)}  \int_{Y^2} \| y - y'\|^p/p \  d\gamma(y,y')  \right)^{1/p}\\
    &\geq \alpha W_p(T\circ G_\#\nu, T_\#\mu).
\end{align*} 
\hfill \qed

\subsection{Clarification of the observed monotonicity in Figure \ref{fig:toy_ex_b}}
\label{sec:monotone_G}
When discussing the special setting of Figure \ref{fig:toy_ex_b} we claimed that the $c_T$-monotonicity of $G$ implies the monotonicity of the approximation of $G$ as a map from $\mathbb{R}^2$ to $\mathbb{R}^2$. We clarify this idea as follows.  

In this setting the manifold $\mathcal{M}$ is a two-dimensional subspace in $\mathbb{R}^D$ ($D=100$ or $500$), which is described by the first two coordinates of the parameterized points in $\mathbb{R}^D$. Therefore, the ideal geometry-preserving encoder is 
\begin{equation}\label{eq:T-gp-ex}
    T(x) = (x_1, x_2) \in \mathbb{R}^2 \ \text{ for } \ x=(x_1, \ldots, x_D) \in \mathbb{R}^D,
\end{equation}
We claim that since the GMEGAN generator $G$ should satisfy $c_T$-cyclical monotonicity, it has to satisfy
\begin{equation}
\langle T \circ G(y) - T \circ G(y'), y - y' \rangle \geq 0 \ \text{ for all } y, y' \in \mathbb{R}^2.
\label{eq:G_monotone}
\end{equation}
To see this claim, apply \Cref{def:c-cm} with $k=2$ to obtain:
\begin{align*}
    c(T(G(y)), y) + c(T(G(y')), y') \leq c(T(G(y)), y')  + c(T(G(y')),y)
\end{align*}
and consequently
\begin{align*}
    -\langle T(G(y)), y\rangle - \langle T(G(y')), y'\rangle \leq -\langle T(G(y)), y'\rangle  - \langle T(G(y')),y\rangle\\
    \langle T (G(y)) - T ( G(y')), y - y' \rangle \geq 0,
\end{align*}
which implies \eqref{eq:G_monotone}.

If the geometry-preserving encoder coincides with $T$  in~\eqref{eq:T-gp-ex}, then \eqref{eq:G_monotone} implies that the map $\tilde{G}(y) = ((G(y))_1, (G(y))_2)$ satisfies the following monotonicity condition  
\[
\langle \tilde{G}(y) - \tilde{G}(y'), y - y' \rangle \geq 0 \ \text{ for all } y, y' \in \mathbb{R}^2.
\]
That is, the first two coordinates of the generator $G$ follow the classical notion of monotonicity. 

\bibliographystyle{plain}
\bibliography{gf}

\end{document}